\tikzstyle{node}=[fill=white, draw=black, shape=rectangle,minimum width=0.6cm, minimum height=0.6cm, node distance=0.6cm]
\tikzstyle{redcirc}=[fill=white, draw=red, shape=circle]
\tikzstyle{bluebox}=[fill=white, draw=blue, shape=rectangle]
\tikzstyle{greenbox}=[fill=white, draw=black!50!green, shape=rectangle]
\newtheorem{theorem}{Theorem}
\newtheorem{lemma}[theorem]{Lemma}
\newtheorem{definition}{Definition}
\newtheorem{proposition}[theorem]{Proposition}
\newtheorem{example}{Example}
\newtheorem{remark}{Remark}
\newtheorem{procedure}{Procedure}
\newcommand{\aspmc}{aspmc\xspace}
\newcommand{\problog}{ProbLog\xspace}
\newcommand{\pita}{PITA\xspace}
\newcommand{\PITA}{PITA\xspace}
\newcommand{\oursolver}{\textsc{WhatIf}\xspace}
\DeclareMathOperator{\pa}{pa}
\DeclareMathOperator{\head}{head}
\DeclareMathOperator{\Do}{do}
\DeclareMathOperator{\body}{body}
\DeclareMathOperator{\Facts}{Facts}
\DeclareMathOperator{\LP}{LP}
\DeclareMathOperator{\error}{error}
\DeclareMathOperator{\CM}{FCM}
\DeclareMathOperator{\Prob}{Prob}
\DeclareMathOperator{\LPAD}{LPAD}
\newcommand\blfootnote[1]{%
  \begingroup
  \renewcommand\thefootnote{}\footnote{#1}%
  \addtocounter{footnote}{-1}%
  \endgroup
}
\begin{document}

\title[What if?]{``What if?'' in Probabilistic Logic Programming}

\author[Rafael Kiesel, Kilian Rückschloß, and Felix Weitkämper]{Rafael Kiesel \\
    TU Wien
\and Kilian Rückschloß\textsuperscript{\href{mailto:kilian.rueckschloss@lmu.de}{\Letter}}\\
    Ludwig-Maximilians-Universität München
\and Felix Weitkämper\\
    Ludwig-Maximilians-Universität München\\
    \\
    All authors contributed equally to this work.}
\label{firstpage}

\maketitle
\begin{abstract}
A ProbLog program is a logic program with facts that only hold with a specified probability. In this contribution we extend this ProbLog language by the ability to answer ``What if'' queries. Intuitively, a ProbLog program defines a distribution by solving a system of equations in terms of mutually independent predefined Boolean random variables. In the theory of causality, Judea Pearl proposes a counterfactual reasoning for such systems of equations.
Based on Pearl's calculus, we provide a procedure for processing these counterfactual queries on ProbLog programs, together with a proof of correctness and a full implementation. Using the latter, we provide insights into the influence of different parameters on the scalability of inference. Finally, we also show that our approach is consistent with CP-logic, i.e. with the causal semantics for logic programs with annotated with disjunctions.
\end{abstract}
\begin{keywords}
Counterfactual Reasoning, Probabilistic Logic Programming, ProbLog, LPAD, Causality, FCM-semantics, CP-logic
\end{keywords}

\section{Introduction}
\label{sec: Introduction}

Humans show the remarkable skill to reason in terms of counterfactuals. This means we reason about how events would unfold under different circumstances without actually experiencing all these different realities. For instance we make judgements like: ``If I had taken a bus earlier, I would have arrived on time.'' without actually experiencing the alternative reality in which we took the bus earlier. As this capability lies at the basis of making sense of the past, planning courses of actions, making emotional and social judgments as well as adapting our behaviour, one also wants an artificial intelligence to reason counterfactually \citep{CounterfactualIntroduction}.\blfootnote{\textsuperscript{\href{mailto:kilian.rueckschloss@lmu.de}{\Letter}}Corresponding author: \href{mailto:kilian.rueckschloss@lmu.de}{kilian.rueckschloss@lmu.de}}

 Here, we focus on the counterfactual reasoning with the semantics provided by \cite{Causality}. Our aim is to establish this kind of reasoning in the ProbLog language of \cite{ProbLog}. To illustrate this issue we introduce a version of the sprinkler example from \cite{Causality}, §1.4.

It is spring or summer, written $szn\_spr\_sum$, with a probability of $\pi_1 := 0.5$.
Consider a road, which passes along a field with a sprinkler~on~it. In spring or summer, the sprinkler is on, written $sprinkler$, with probability~$\pi_2 := 0.7$. Moreover, it rains, denoted by~$rain$, with probability~${\pi_3 := 0.1}$ in spring or summer and with probability $\pi_4 := 0.6$ in fall or winter. If it rains or the sprinkler is on, the pavement of the road gets wet, denoted by $wet$. When the pavement is wet, the road is slippery, denoted by~$slippery$.  
Under the usual reading of ProbLog programs one would model the situation above with the following program $\textbf{P}$: 

\begin{minted}{prolog}
0.5::u1. 0.7::u2. 0.1::u3. 0.6::u4. 
szn_spr_sum :- u1.  sprinkler :- szn_spr_sum, u2.
rain :- szn_spr_sum, u3.    rain :- \+szn_spr_sum, u4.
wet :- rain.    wet :- sprinkler. slippery :- wet.
\end{minted}

To construct a semantics for the program $\textbf{P}$ we generate mutually independent Boolean random variables $u1$-$u4$  with $\pi(ui) = \pi_i$ for all~$1 \leq i \leq 4$. The meaning of the program $\textbf{P}$ is then given by the following system of equations:
\begin{align*}
&szn\_spr\_sum := u1
&rain := \left( szn\_spr\_sum \land u3 \right) \lor 
       \left( \neg szn\_spr\_sum \land u4 \right)  
\end{align*}
\begin{align}
&sprinkler := szn\_spr\_sum \land u2  
&wet := \left( rain \lor sprinkler \right)  
&& slippery := wet 
\label{example - sprinkler}
\end{align}

Finally, assume we observe that the sprinkler is on and that the road is slippery. What is the probability of the road being slippery if the sprinkler were switched off?

Since we observe that the sprinkler is on, we conclude that it is spring or summer. However, if the sprinkler is off, the only possibility for the road to be slippery is given by $rain$. Hence, we obtain a probability of $0.1$ for the road to be slippery if the sprinkler were off. 

In this work, we automate this kind of reasoning. However, to the best of our knowledge, current probabilistic logic programming systems cannot evaluate counterfactual queries.
While we may ask what the probability of $slippery$ is if we switch the sprinkler off and observe some evidence, we obtain a zero probability for $sprinkler$ after switching the sprinkler off, which renders the corresponding conditional probability meaningless. To circumvent this problem, we adapt the twin-network method of \cite{TwinNetwork} from causal models to probabilistic logic programming, with a proof of correctness. Notably, this reduces counterfactual reasoning to marginal inference over a modified program. Hence, we can immediately make use of the established efficient inference engines to accomplish our goal. 

We also check that our approach is consistent with the counterfactual reasoning for logic programs with annotated disjunctions  or \mbox{LPAD-programs} \citep{LPAD}, which was presented by \cite{cp_counterfactuals}.  In this way, we fill the gap of showing that the causal reasoning for LPAD-programs of \cite{cplogic} is indeed consistent with Pearl's theory of causality and we establish the expressive equivalence of ProbLog and LPAD regarding counterfactual reasoning.

Apart from our theoretical contributions, we provide a full implementation by making use of the \aspmc library~\citep{CycleBreaking}. Additionally, we investigate the scalability of the two main approaches used for efficient inference, with respect to program size and structural complexity, as well as the influence of evidence and interventions on performance. 

\section{Preliminaries} 

Here, we recall the theory of counterfactual reasoning from \cite{Causality} before we introduce the ProbLog language of \cite{ProbLog} in which we would like to process counterfactual queries.   

\subsection{Pearl's Formal Theory of Counterfactual Reasoning}
\label{subsec : Pearls Functional Causal Models}

The starting point of a formal theory of counterfactual reasoning is the introduction of a model that is capable of answering the intended queries. To this aim we recall the definition of a functional causal model from \cite{Causality}, §1.4.1 and §7 respectively:

\begin{definition}[Causal Model]
A \textbf{functional causal model} or \textbf{causal model}~$\mathcal{M}$ on a set of variables $\textbf{V}$ is a system of equations, which consists of one equation of the form 
$X := f_X (\pa(X),\error(X))$
for each variable~\mbox{$X \in \textbf{V}$}. Here, the \textbf{parents} $\pa(X) \subseteq \textbf{V}$ of $X$ form a subset of the set of variables $\textbf{V}$, the \textbf{error term} $\error(X)$ of $X$ is a tuple of random variables and $f_X$ is a function defining~$X$ in terms of the parents $\pa(X)$ and the error term $\error (X)$ of $X$. 

Fortunately, causal models do not only support queries about conditional and unconditional probabilities but also queries about the effect of external interventions. Assume we are given a subset of variables $\textbf{X} := \{ X_1 , ... , X_k \} \subseteq \textbf{V}$ together with a vector of possible values $\textbf{x} := (x_1,...,x_k)$ for the variables in $\textbf{X}$. In order to model the effect of setting the variables in $\textbf{X}$ to the values specified by~$\textbf{x}$, we simply replace the equations for $X_i$ in $\mathcal{M}$ by~\mbox{$X_i := x_i$} for all $1 \leq i \leq k$. 

To guarantee that the causal models $\mathcal{M}$ and $\mathcal{M}^{\Do(\textbf{X} := \textbf{x})}$ yield well-defined distributions $\pi_{\mathcal{M}}(\_)$ and $\pi_{\mathcal{M}}(\_ \vert \Do (\textbf{X} := \textbf{x}))$ we explicitly assert that the systems of equations $\mathcal{M}$ and $\mathcal{M}^{\Do(\textbf{X} := \textbf{x})}$ have a unique solution for every tuple $\textbf{e}$ of possible values for the error terms~\mbox{$\error(X)$, $X \in \textbf{V}$} and for every \textbf{intervention} $\textbf{X} := \textbf{x}$.
\label{defnition - causal models}
\end{definition}

\begin{example}
The system of equations (\ref{example - sprinkler}) from Section \ref{sec: Introduction} forms a (functional) causal model on the set of variables \mbox{$\textbf{V} := \{ szn\_spr\_sum, rain , sprinkler, wet, slippery \}$} if we define $\error (szn\_spr\_sum):=~u1$, $\error (sprinkler) :=~u2$ and $\error(rain) :=~(u3, u4)$. To predict the effect of switching the sprinkler on we simply replace the equation for $sprinkler$
by $sprinkler :=~True$.
\label{example - causal Model}
\end{example}

Finally, let $\textbf{E}, \textbf{X} \subseteq \textbf{V}$ be two subset of our set of variables~$\textbf{V}$. Now suppose we observe the evidence that $\textbf{E} = \textbf{e}$ and ask ourselves what would have been happened if we had set~\mbox{$\textbf{X} := \textbf{x}$}. Note that in general $\textbf{X} = \textbf{x}$ and $\textbf{E} = \textbf{e}$ contradict each other. In this case, we talk about a \textbf{counterfactual query}. 

\begin{example}
    Reconsider the query ${\pi (slippery \vert slippery, sprinkler, \Do (\neg sprinkler))}$ in the introduction, i.e.~in the causal model (\ref{example - sprinkler})  we observe the sprinkler to be on and the road to be slippery while asking for the probability of the road to be slippery if the sprinkler were off. This is a counterfactual query as our evidence $\{ sprinkler, slippery \}$ contradicts our intervention~$\Do (\neg sprinkler)$. 
\end{example}

To answer this query based on a causal model $\mathcal{M}$ on $\textbf{V}$ we proceed in three steps:
In the \textbf{abduction} step we adjust the distribution of our error terms by replacing the distribution~$\pi_{\mathcal{M}}(\error(V))$ with the conditional distribution~$\pi_{\mathcal{M}}(\error(V) \vert \textbf{E} = \textbf{e})$ for all variables~\mbox{$V \in \textbf{V}$.} Next, in the \textbf{action} step we intervene in the resulting model according to~\mbox{$\textbf{X} := \textbf{x}$.} Finally, we are able to compute the desired probabilities~\mbox{$\pi_{\mathcal{M}}(\_ \vert \textbf{E} = \textbf{e}, \Do (\textbf{X} := \textbf{x}))$} from the modified model in the \textbf{prediction} step~\mbox{\cite[§1.4.4]{Causality}}. For an illustration of the treatment of counterfactuals we refer to the introduction.

To avoid storing the joint distribution  $\pi_{\mathcal{M}}(\error(V) \vert \textbf{E} = \textbf{e})$ for $V \in \textbf{V}$ \cite{TwinNetwork} developed the \textbf{twin network method}. They first copy the set of variables $\textbf{V}$ to a set $\textbf{V}^{\ast}$.  Further, they build a new causal model $\mathcal{M}^{K}$ on the variables $\textbf{V} \cup \textbf{V}^{\ast}$ by setting 
$$
V := 
\begin{cases}
f_X (\pa(X), \error(X)), & \text{if}~V=X \in \textbf{V} \\
f_X (\pa(X)^{\ast}, \error(X)), & \text{if}~V=X^{\ast} \in \textbf{V}^{\ast}
\end{cases}
.$$ 
for every~\mbox{$V \in \textbf{V} \cup \textbf{V}^{\ast}$}, where $\pa(X)^{\ast} := \{ X^{\ast} \vert X \in \pa(X) \}$. Further, they intervene according to ${\textbf{X}^{\ast} := \textbf{x}}$ to obtain the model~$\mathcal{M}^{K, \Do(\textbf{X}^{\ast} := \textbf{x})}$.
Finally, one expects that
$$\pi_{\mathcal{M}}(\_ \vert \textbf{E} = \textbf{e}, \Do (\textbf{X} := \textbf{x})) = 
\pi_{\mathcal{M}^{K, \Do(\textbf{X}^{\ast} := \textbf{x})}}(\_^{\ast} \vert \textbf{E} = \textbf{e}).
$$ 
In Example \ref{example - twin network method} we demonstrate the twin network method for the ProbLog program $\textbf{P}$ and the counterfactual query of the introduction.

\subsection{The ProbLog Language}
\label{sec:Distribution Semantics}

We proceed by recalling the ProbLog language from \mbox{\cite{ProbLog}}. As the semantics of non-ground ProbLog programs is usually defined by grounding, we will restrict ourselves to the propositional case, i.e.~we construct our programs from a propositional alphabet $\mathfrak{P}$:

\begin{definition}[propositional alphabet]
A \textbf{propositional alphabet} $\mathfrak{P}$ is a finite set of \textbf{propositions} together with a subset~\mbox{$\mathfrak{E}(\mathfrak{P}) \subseteq \mathfrak{P}$} of \textbf{external propositions}. Further, we call~\mbox{$\mathfrak{I}(\mathfrak{P}) = \mathfrak{P} \setminus \mathfrak{E} (\mathfrak{P}) $} the set of \textbf{internal propositions}.
\end{definition}

\begin{example}
To build the ProbLog program $\textbf{P}$ in Section \ref{sec: Introduction} we need the alphabet~$\mathfrak{P}$ consisting of the internal propositions~\mbox{$
\mathfrak{I} (\mathfrak{P}) := \{ szn\_spr\_sum,~sprinkler,~ rain,~wet,~slippery\}$} and the external propositions ${\mathfrak{E} (\mathfrak{P}) := \{ u1,~u2,~
u3,~u4 \}}.
$
\label{example - alphabet}
\end{example}

From propositional alphabets we build literals, clauses, and random facts, where random facts are used to specify the probabilities in our model. To proceed, let us fix a propositional alphabet~$\mathfrak{P}$.

\begin{definition}[Literal, Clause and Random Fact]
A \textbf{literal} $l$ is an expression $p$ or $\neg p$ for a proposition $p \in \mathfrak{P}$. We call~$l$ a \textbf{positive} literal if it is of the form $p$ and a \textbf{negative} literal if it is of the form $\neg p$.
A \textbf{clause} $LC$ is an expression of the form $h \leftarrow b_1,...,b_n$, where
\mbox{$\head (LC) := h \in \mathfrak{I}(\mathfrak{P})$} is an internal proposition and where $\body (LC) := \{  b_1, ..., b_n \}$ is a finite set of literals.  
A \textbf{random fact} $RF$ is an expression of the form 
$\pi(RF) :: u(RF)$, where~\mbox{$u(RF) \in \mathfrak{E}(\mathfrak{P})$} is an external proposition and where~\mbox{$\pi (RF) \in [0,1]$} is the \textbf{probability} of~$u(RF)$.
\end{definition}

\begin{example}
In Example \ref{example - alphabet} we have that $szn\_spr\_sum$ is a positive literal, whereas~$\neg szn\_spr\_sum$ is a negative literal. Further, $rain \leftarrow \neg szn\_spr\_sum, u4$ is a clause and $0.6 :: u4$ is a random fact.
\label{example - expressions}
\end{example}

Next, we give the definition of logic programs and ProbLog programs:

\begin{definition}[Logic Program and ProbLog Program]
A \textbf{logic program} is a finite set of clauses. 
Further, a \textbf{ProbLog program} $\textbf{P}$ is given by a logic program $\LP (\textbf{P})$ and a set~$\Facts (\textbf{P})$, which consists of a unique random fact for every external proposition. We call~$\LP (\textbf{P})$ the \textbf{underlying logic program} of $\textbf{P}$. 
\end{definition}

To reflect the closed world assumption we omit random facts of the form~$0::u$ in the set~\mbox{$\Facts (\textbf{P})$}. 

\begin{example}
The program $\textbf{P}$ from the introduction is a ProbLog program. We obtain the corresponding underlying logic program $\LP (\textbf{P})$ by erasing all random facts of the form $\_ :: ui$ from $\textbf{P}$.
\label{example - program}
\end{example}

For a set of 
propositions $\mathfrak{Q} \subseteq \mathfrak{P}$ a 
\textbf{$\mathfrak{Q}$-structure} is a function~\mbox{$\mathcal{M} : 
\mathfrak{Q} \rightarrow \{ True , False \},~p \mapsto 
p^{\mathcal{M}}$}. Whether a formula $\phi$ is satisfied by a 
$\mathfrak{Q}$-structure~$\mathcal{M}$, written $\mathcal{M} 
\models \phi$, is defined as usual in propositional logic. As the semantics of a logic program $\textbf{P}$ with stratified negation we take the assignment~\mbox{$\mathcal{E} \mapsto \mathcal{M} (\mathcal{E}, \textbf{P})$} that relates each \mbox{$\mathfrak{E}$-structure}~$\mathcal{E}$ with the minimal model $\mathcal{M} (\mathcal{E}, \textbf{P})$ of the program $\textbf{P} \cup \mathcal{E}$. 

\section{Counterfactual Reasoning: Intervening and Observing Simultaneously}

We return to the objective of this paper, establishing Pearl's treatment of counterfactual queries in ProbLog. As a first step, we introduce a new semantics for ProbLog programs in terms of causal models.

\begin{definition}[FCM-semantics]
For a ProbLog program $\textbf{P}$ the \textbf{functional causal models semantics} or~\mbox{\textbf{FCM-semantics}} is the system of equations that is given by
$$
\CM (\textbf{P}) :=
\left\{ 
p^{\CM} := \bigvee_{\substack{LC \in \LP (\textbf{P}) \\ \head(LC) = p}} 
\left( 
\bigwedge_{\substack{l \in \body (LC)\\ l~\text{internal literal}}} l^{\CM} \land 
\bigwedge_{\substack{u(RF) \in \body (LC)\\ RF \in \Facts (\textbf{P})}} u(RF)^{\CM}
\right) 
\right\}_{p \in \mathfrak{I}(\mathfrak{P})},
$$
where $u(RF)^{\CM}$ are mutually independent Boolean random variables for every random fact \mbox{$RF \in \Facts(\textbf{P})$} that are distributed according to \mbox{$
\pi \left[ u(RF)^{\CM} \right] = \pi(RF)
$}.  Here, an empty disjunction evaluates to $False$ and an empty conjunction evaluates to $True$. 

Further, we say that $\textbf{P}$ has \textbf{unique supported models} if $\CM (\textbf{P})$ is a causal model, i.e.~if it posses a unique solution for every $\mathfrak{E}$-structure $\mathcal{E}$ and every possible intervention $\textbf{X} := \textbf{x}$. In this case, the superscript $\CM$ indicates that the expressions are interpreted according to the \mbox{FCM-semantics} as random variables rather than predicate symbols. It will be omitted if the context is clear. For a Problog program $\textbf{P}$ with unique supported models the causal model~$\CM (\textbf{P})$ determines a unique joint distribution~$\pi_{\textbf{P}}^{\CM}$ on $\mathfrak{P}$. Finally, for a \mbox{$\mathfrak{P}$-formula}~$\phi$ we define the probability to be true by 
$$
\pi_{\textbf{P}}^{\CM}(\phi) := \sum_{\substack{\mathcal{M} ~\mathfrak{P}\text{-structure} \\ \mathcal{M} \models \phi}} \pi_{\textbf{P}}^{\CM}(\mathcal{M}) = 
\sum_{\substack{\mathcal{E} ~\mathfrak{E}\text{-structure} \\ \mathcal{M}(\mathcal{E}, \textbf{P}) \models \phi}} \pi_{\textbf{P}}^{\CM}(\mathcal{E}) .
$$
\label{definition - FCM-semantics}
\end{definition}

\begin{example}
As intended in the introduction, the causal model (\ref{example - sprinkler}) yields the FCM-semantics of the program~$\textbf{P}$.  Now let us calculate the probability $\pi_{\textbf{P}}^{\CM} (sprinkler)$ that the sprinkler is on.
\begin{align*}
 &   \pi_{\textbf{P}}^{\CM} (sprinkler) =
    \sum_{\substack{\mathcal{M} ~\mathfrak{P}\text{-structure} \\ \mathcal{M} \models sprinkler}} \pi_{\textbf{P}}^{\CM}(\mathcal{M})
    =
    \sum_{\substack{\mathcal{E} ~\mathfrak{E}\text{-structure} \\ \mathcal{M}(\mathcal{E}, \textbf{P}) \models sprinker}} \pi_{\textbf{P}}^{\CM}(\mathcal{E}) = \\
& = \pi (u1,u2,u3,u4)+
    \pi (u1,u2,\neg u3,u4)+
    \pi (u1,u2,u3, \neg u4) +
    \pi (u1,u2,\neg u3, \neg u4) 
    \stackrel{\substack{ui~\text{mutually} \\ \text{independent}}}{=}  \\
& = 0.5 \cdot 0.7 \cdot 0.1 \cdot 0.6+
    0.5 \cdot 0.7 \cdot 0.9 \cdot 0.6+
   0.5 \cdot 0.7 \cdot 0.1 \cdot 0.4 +
    0.5 \cdot 0.7 \cdot 0.9 \cdot 0.4
    = 0.35
\end{align*}
\end{example}

As desired, we obtain that the FCM-semantics consistently generalizes the distribution semantics of  \cite{POOLE199381} and \mbox{\cite{distribution_semantics}}. 

\begin{theorem}[\cite{fcm-semantics}]
Let $\textbf{P}$ be a ProbLog program with unique supported models. The \mbox{FCM-semantics} defines a joint distribution~$\pi_{\textbf{P}}^{\CM}$ on $\mathfrak{P}$, 
which coincides with the distribution semantics $\pi_{\textbf{P}}^{dist}$. 
$\square$
\label{theorem - consistency of FCM-semantics}
\end{theorem}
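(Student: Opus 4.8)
The plan is to reduce the equality of the two distributions to a single structural fact: for each fixed assignment of the external propositions, the equational view behind the FCM-semantics and the minimal-model view behind the distribution semantics select the same $\mathfrak{P}$-structure. Well-definedness of $\pi_{\textbf{P}}^{\CM}$ as a joint distribution is already granted by the unique-supported-models hypothesis through Definition \ref{definition - FCM-semantics}, so the substance is the coincidence $\pi_{\textbf{P}}^{\CM} = \pi_{\textbf{P}}^{dist}$. I would split the argument into a comparison of weights and a comparison of models.

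First I would check that both semantics place the same probability mass on each $\mathfrak{E}$-structure. In the FCM-semantics the variables $u(RF)^{\CM}$ are mutually independent with $\pi[u(RF)^{\CM}] = \pi(RF)$, so
$$
\pi_{\textbf{P}}^{\CM}(\mathcal{E}) = \prod_{\mathcal{E} \models u(RF)} \pi(RF) \cdot \prod_{\mathcal{E} \not\models u(RF)} \left( 1 - \pi(RF) \right),
$$
which is exactly the weight the distribution semantics assigns to the total choice $\mathcal{E}$. Thus the only remaining question is which $\mathfrak{P}$-structure each semantics associates with a given $\mathcal{E}$.

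The heart of the proof is to show that for every $\mathfrak{E}$-structure $\mathcal{E}$ the unique solution of $\CM(\textbf{P})$, with the external propositions fixed by $\mathcal{E}$, equals the minimal model $\mathcal{M}(\mathcal{E}, \textbf{P})$ of $\textbf{P} \cup \mathcal{E}$. I would argue that a $\mathfrak{P}$-structure extending $\mathcal{E}$ solves $\CM(\textbf{P})$ if and only if it is a supported model of $\textbf{P} \cup \mathcal{E}$: the equation for an internal proposition $p$ makes $p$ true exactly when some clause with head $p$ has all of its body literals satisfied, which is precisely the condition defining a supported model, i.e.~a model of Clark's completion of $\textbf{P} \cup \mathcal{E}$. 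Hence the solutions of $\CM(\textbf{P})$ are exactly the supported models, and by the hypothesis there is exactly one. Since the minimal model of a stratified program is always supported, it must be this unique supported model, so the FCM-solution and $\mathcal{M}(\mathcal{E}, \textbf{P})$ coincide.

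Combining the two steps, the defining formula
$$
\pi_{\textbf{P}}^{\CM}(\phi) = \sum_{\substack{\mathcal{E}~\mathfrak{E}\text{-structure} \\ \mathcal{M}(\mathcal{E}, \textbf{P}) \models \phi}} \pi_{\textbf{P}}^{\CM}(\mathcal{E})
$$
from Definition \ref{definition - FCM-semantics} becomes term-for-term identical to the defining sum of the distribution semantics, giving $\pi_{\textbf{P}}^{\CM} = \pi_{\textbf{P}}^{dist}$. The main obstacle is the middle step: proving that the FCM-solutions are exactly the supported models and that the minimal model is always among them, so that uniqueness forces the two selected structures to agree. This is where the stratification assumption and the equivalence between the equational (completion) and least/perfect-model views of the program must be handled with care.
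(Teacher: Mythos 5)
The paper does not actually prove this statement: it is imported from \cite{fcm-semantics} and marked with $\square$, so there is no in-paper proof to compare against. Judged on its own, your argument is the standard one and is sound. The two halves are exactly what is needed: (i) both semantics assign the weight $\prod_{\mathcal{E}\models u(RF)}\pi(RF)\cdot\prod_{\mathcal{E}\not\models u(RF)}(1-\pi(RF))$ to a total choice $\mathcal{E}$, by mutual independence on one side and by the definition of the distribution semantics on the other; and (ii) for fixed $\mathcal{E}$, the solutions of the equation system $\CM(\textbf{P})$ are precisely the models of Clark's completion of $\textbf{P}\cup\mathcal{E}$, i.e.\ the supported models, and since the perfect model of a stratified program is always supported, uniqueness forces the FCM-solution to be $\mathcal{M}(\mathcal{E},\textbf{P})$. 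Note that the second equality displayed inside Definition \ref{definition - FCM-semantics} already silently asserts your step (ii), so your proof is really the justification that this definition is coherent. Two minor points worth flagging: the completion-style equation in Definition \ref{definition - FCM-semantics} only explicitly conjoins \emph{positive} occurrences of external propositions, so strictly speaking the identification with Clark's completion needs the (harmless) convention that negative external literals are handled like internal ones; and the hypothesis of the theorem grants uniqueness of solutions under all interventions, of which you only need the non-intervened case --- that is fine, but worth saying explicitly.
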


As intended, our new semantics transfers the query types of functional causal models to the framework of ProbLog. Let $\textbf{P}$ be a ProbLog program with unique supported models. First, we discuss the treatment of external interventions.

Let $\phi$ be a $\mathfrak{P}$-formula and let $\textbf{X} \subseteq \mathfrak{I}(\mathfrak{P})$ be a subset of internal propositions together with a truth value assignment $\textbf{x}$. Assume we would like to calculate the probability~\mbox{$\pi_{\textbf{P}}^{FCM}(\phi \vert \Do (\textbf{X} := \textbf{x}))$} of~$\phi$ being true after setting the random variables in~$\textbf{X}^{\CM}$ to the truth values specified by $\textbf{x}$. In this case, the Definition \ref{defnition - causal models} and Definition~\ref{definition - FCM-semantics} yield the following algorithm:

\begin{procedure}[Treatment of External Interventions]
We build a modified program~$\textbf{P}^{\Do(\textbf{X} := \textbf{x})}$ by erasing for every proposition $h \in \textbf{X}$ each clause \mbox{$LC \in \LP (\textbf{P})$} with $\head(LC) = h$ and adding the fact $h \leftarrow$ to $\LP (\textbf{P})$ if $h^{\textbf{x}} = True$.

Finally, we query the program $\textbf{P}^{\Do(\textbf{X} := \textbf{x})}$ for the probability of $\phi$ to obtain the desired probability~\mbox{$\pi_{\textbf{P}}^{\CM}(\phi \vert \Do (\textbf{X} := \textbf{x}))$}.
\label{procedure - treatment of external interventions}
\end{procedure}

From the construction of the program $\textbf{P}^{\Do(\textbf{X} := \textbf{x})}$ in Procedure \ref{procedure - treatment of external interventions} we derive the following classification of programs with unique supported models.

\begin{proposition}[Characterization of Programs with Unique Supported Models]
A ProbLog program $\textbf{P}$ has unique supported models if and only if for every \mbox{$\mathfrak{E}$-structure}~$\mathcal{E}$ and for every truth value assignment $\textbf{x}$ on a subset of internal propositions $\textbf{X} \subseteq \mathfrak{I}(\mathfrak{P})$ there exists a unique model $\mathcal{M}\left( \mathcal{E}, \LP \left( \textbf{P}^{\Do (\textbf{X} := \textbf{x})} \right) \right)$ of the logic program $\LP \left( \textbf{P}^{\Do (\textbf{X} := \textbf{x})} \right) \cup \mathcal{E}$. In particular, the program $\textbf{P}$ has unique supported model if its underlying logic program $\LP (\textbf{P})$ is acyclic.~$\square$
\label{proposition - Characterization of Programs with Unique Supported Models}
\end{proposition}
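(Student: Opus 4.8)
\noindent\emph{Proof plan.} The plan is to reduce the existence and uniqueness of a solution of the system of equations $\CM(\textbf{P})$ to the existence and uniqueness of a supported model of an associated logic program, and then to read off the effect of interventions through Procedure~\ref{procedure - treatment of external interventions}.

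First I would observe that, once an $\mathfrak{E}$-structure $\mathcal{E}$ fixes all the values $u(RF)^{\CM}$, the defining equations of Definition~\ref{definition - FCM-semantics} are exactly Clark's completion of $\LP(\textbf{P})$, with the external propositions supplying the error terms. Indeed, an assignment of truth values to the internal propositions solves $\CM(\textbf{P})$ if and only if it renders each equivalence $p \leftrightarrow \bigvee_{LC}(\dots)$ true, which is precisely the requirement that it be a supported model of $\LP(\textbf{P}) \cup \mathcal{E}$, i.e.\ a model of the program all of whose true atoms are justified by some clause body. Hence solutions of $\CM(\textbf{P})$ for the fixed $\mathcal{E}$ are in bijection with supported models of $\LP(\textbf{P}) \cup \mathcal{E}$, and ``unique solution'' becomes ``unique supported model'', the latter being the intended $\mathcal{M}(\mathcal{E}, \LP(\textbf{P}))$.

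Next I would incorporate interventions. By Definition~\ref{defnition - causal models}, intervening by $\textbf{X} := \textbf{x}$ replaces the equation of each $h \in \textbf{X}$ with the constant equation $h := h^{\textbf{x}}$. I would then check that this intervened system agrees equation by equation with $\CM(\textbf{P}^{\Do(\textbf{X} := \textbf{x})})$: for $h \in \textbf{X}$, Procedure~\ref{procedure - treatment of external interventions} erases every clause with head $h$ and adjoins $h \leftarrow$ exactly when $h^{\textbf{x}} = True$, so that the completion equation for $h$ collapses to $h := True$ or to the empty disjunction $h := False$ in accordance with $h^{\textbf{x}}$, while the equations for all other propositions are left unchanged. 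Composing this identification with the completion correspondence of the previous paragraph produces, for every $\mathcal{E}$ and every assignment $\textbf{X} := \textbf{x}$, a bijection between the solutions of $\CM(\textbf{P})^{\Do(\textbf{X} := \textbf{x})}$ under $\mathcal{E}$ and the supported models of $\LP(\textbf{P}^{\Do(\textbf{X} := \textbf{x})}) \cup \mathcal{E}$. Since by Definition~\ref{definition - FCM-semantics} the program $\textbf{P}$ has unique supported models exactly when $\CM(\textbf{P})$ has a unique solution for every $\mathcal{E}$ and every intervention, quantifying over all $\mathcal{E}$ and all $\textbf{X} := \textbf{x}$ (with the empty intervention recovering $\textbf{P}$ itself) yields the claimed biconditional.

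For the concluding ``in particular'', I would argue that acyclicity is preserved by each of these transformations: deleting clauses only removes arcs from the dependency graph, while adjoining the bodiless facts $h \leftarrow$ and the atoms fixed by $\mathcal{E}$ introduces only source nodes, so $\LP(\textbf{P}^{\Do(\textbf{X} := \textbf{x})}) \cup \mathcal{E}$ remains acyclic. An acyclic logic program has a unique supported model --- its completion is consistent and possesses a unique model --- and the biconditional just established then shows that $\textbf{P}$ has unique supported models. I expect the crux of the argument to be the bookkeeping of the first two steps: verifying that solving the FCM equation system really is computing supported models, and that the empty-conjunction and empty-disjunction conventions of Definition~\ref{definition - FCM-semantics} dovetail precisely with the clause erasure and fact insertion of Procedure~\ref{procedure - treatment of external interventions}, including the degenerate case $h^{\textbf{x}} = False$.
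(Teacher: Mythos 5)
Your argument is correct and matches the paper's intent: the paper states this proposition without an explicit proof, deriving it directly from the construction in Procedure~\ref{procedure - treatment of external interventions}, and your elaboration—identifying the FCM equations with Clark's completion so that solutions correspond to supported models, checking that clause erasure and fact insertion reproduce the intervened equations (including the empty-disjunction case $h^{\textbf{x}} = False$), and invoking the standard unique-supported-model property of acyclic programs—is exactly the bookkeeping the authors leave implicit.
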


\begin{example}
As the underlying logic program of the ProbLog program $\textbf{P}$ in the introduction is acyclic we obtain from Proposition \ref{proposition - Characterization of Programs with Unique Supported Models} that it is a ProbLog program with unique supported models i.e. its FCM-semantics is well-defined. 
\label{example - FCM-semantics well-defined}
\end{example}

However, we do not only want to either observe or intervene. We also want to observe and intervene simultaneously. 

Let $\textbf{E} \subseteq \mathfrak{I}(\mathfrak{P})$ be another subset of internal propositions together with a truth value assignment~$\textbf{e}$. Now suppose we observe the evidence $\textbf{E}^{\CM} = \textbf{e}$ and we ask ourselves what is the probability $\pi_{\textbf{P}}^{\CM} (\phi \vert \textbf{E} = \textbf{e} , \Do (\textbf{X} := \textbf{x}))$ of the formula $\phi$ to hold if we had set~\mbox{$\textbf{X}^{\CM} := \textbf{x}$}. Note that again we explicitly allow $\textbf{e}$ and $\textbf{x}$ to contradict each other. The twin network method of \cite{TwinNetwork} yields the following procedure to answer those queries in ProbLog:

\begin{procedure}[Treatment of Counterfactuals]
First, we define two propositional alphabets~$\mathfrak{P}^{e}$ to handle the evidence and~$\mathfrak{P}^{i}$ to handle the interventions. In particular, we set~\mbox{$\mathfrak{E}(\mathfrak{P}^{e}) = \mathfrak{E} ( \mathfrak{P}^{i} ) = \mathfrak{E} (\mathfrak{P})$} and~\mbox{$\mathfrak{I}(\mathfrak{P}^{e/i}) := \left\{ p^{e/i} \text{ : } p \in \mathfrak{I} (\mathfrak{P}) \right\}$} with ${\mathfrak{I}(\mathfrak{P}^{e}) \cap \mathfrak{I}(\mathfrak{P}^{i}) = \emptyset}$. In this way, we obtain maps~\mbox{$
\_^{e/i} :  \mathfrak{P} \rightarrow \mathfrak{P}^{e/i},~p \mapsto 
\begin{cases}
p^{e/i}, & p \in \mathfrak{I} (\mathfrak{P}) \\
p , & \text{else}
\end{cases} 
$} 
that easily generalize to literals, clauses, programs etc.  

Further, we define the \textbf{counterfactual semantics} of $\textbf{P}$ by $\textbf{P}^{K} := \textbf{P}^{e} \cup \textbf{P}^{i}$. Next, we intervene in~$\textbf{P}^{K}$ according to $\Do ( \textbf{X}^i := \textbf{x} )$ and obtain the program~$\textbf{P}^{K, \Do (\textbf{X}^i := \textbf{x})}$ of Procedure \ref{procedure - treatment of external interventions}. Finally, we obtain the desired probability $\pi_{\textbf{P}}^{FCM} (\phi \vert \textbf{E} = \textbf{e} , \Do (\textbf{X} := \textbf{x}))$ by querying the program~$\textbf{P}^{K, \Do (\textbf{X}^i := \textbf{x})}$ for the conditional probability~$\pi (\phi^i \vert \textbf{E}^e = \textbf{e})$.
\label{procedure - answering conterfactual queries}
\end{procedure}

\begin{example}
Consider the program $\textbf{P}$ of Example \ref{example - program} and assume we observe that the sprinkler is on and that it is slippery. To calculate the probability~\mbox{$\pi (slippery \vert sprinkler , slippery, \Do (\neg sprinkler))$} that it is slippery if the sprinkler were off, we need to process the query $\pi (slippery^i \vert slippery^e, sprinkler^e)$ on the following program~\mbox{$\textbf{P}^{K, \Do (\neg sprinkler^i)}$}.
\begin{minted}{prolog}
0.5::u1. 0.7::u2. 0.1::u3. 0.6::u4. 
szn_spr_sum__e :- u1. sprinkler__e :- szn_spr_sum__e, u2.
rain__e :- szn_spr_sum__e, u3.    rain__e :- \+szn_spr_sum__e, u4.
wet__e :- rain__e. wet__e :- sprinkler__e. slippery__e :- wet__e.
szn_spr_sum__i :- u1. 
rain__i :- szn_spr_sum__i, u3. rain__i :- \+szn_spr_sum__i, u4. 
wet__i :- rain__i.  wet__i :- sprinkler__i. slippery__i :- wet__i.
\end{minted} 
Note that we use the string \mintinline{prolog}{__} to refer to the superscript $e/i$. 
\label{example - twin network method}
\end{example}

In the Appendix, we prove the following result, stating that a ProbLog program $\textbf{P}$ yields the same answers to counterfactual queries, denoted $\pi_{\textbf{P}}^{\CM}(\_ \vert \_)$, as the causal model $\CM (\textbf{P})$, denoted~$\pi_{\CM (\textbf{P})} (\_ \vert \_)$. 

\begin{theorem}[Correctness of our Treatment of Counterfactuals]
Our treatment of counterfactual queries in Procedure \ref{procedure - answering conterfactual queries} is correct i.e.~in the situation of Procedure~\ref{procedure - answering conterfactual queries} we obtain that
$
\pi_{\textbf{P}}^{FCM} (\phi \vert \textbf{E} = \textbf{e} , \Do (\textbf{X} := \textbf{x}))
=
\pi_{\CM(\textbf{P})} (\phi \vert \textbf{E} = \textbf{e} , \Do (\textbf{X} := \textbf{x})).
$
\label{Theorem - Corrextness of Kiesel procedure}
\end{theorem}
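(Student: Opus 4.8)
The plan is to unfold both sides into explicit sums over $\mathfrak{E}$-structures and to match them term by term, thereby reducing the statement to the correctness of the twin-network identity for the particular causal model $\CM(\textbf{P})$. First I would expand the right-hand side $\pi_{\CM(\textbf{P})}(\phi \vert \textbf{E} = \textbf{e}, \Do(\textbf{X} := \textbf{x}))$ by literally following Pearl's three steps on $\CM(\textbf{P})$. The key observation is that a joint realisation of all error terms $\error(p)$, $p \in \mathfrak{I}(\mathfrak{P})$, is nothing but an $\mathfrak{E}$-structure $\mathcal{E}$, so the abduction step replaces the weight $\pi(\mathcal{E})$ by the conditional weight proportional to $\pi(\mathcal{E}) \cdot \llbracket \mathcal{M}(\mathcal{E}, \textbf{P}) \models \textbf{E} = \textbf{e} \rrbracket$. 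The action step then amounts to passing from $\textbf{P}$ to the intervened program $\textbf{P}^{\Do(\textbf{X} := \textbf{x})}$ of Procedure~\ref{procedure - treatment of external interventions}, whose unique supported model for each $\mathcal{E}$ is $\mathcal{M}(\mathcal{E}, \LP(\textbf{P}^{\Do(\textbf{X} := \textbf{x})}))$ by Proposition~\ref{proposition - Characterization of Programs with Unique Supported Models}. Consequently the right-hand side is the ratio with numerator $\sum_{\mathcal{E}} \pi(\mathcal{E}) \llbracket \mathcal{M}(\mathcal{E}, \textbf{P}) \models \textbf{E} = \textbf{e} \rrbracket \llbracket \mathcal{M}(\mathcal{E}, \LP(\textbf{P}^{\Do(\textbf{X}:=\textbf{x})})) \models \phi \rrbracket$ and denominator $\sum_{\mathcal{E}} \pi(\mathcal{E}) \llbracket \mathcal{M}(\mathcal{E}, \textbf{P}) \models \textbf{E} = \textbf{e} \rrbracket$. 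Working with the global $\mathfrak{E}$-structure rather than with per-variable error tuples is what lets me ignore the fact that a single random fact may feed several clauses, and hence several error terms; this is precisely the reading of ``sharing the error terms'' on which the twin network relies.

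The central step is a decoupling lemma for the twin program. I would show that for every $\mathfrak{E}$-structure $\mathcal{E}$ the logic program $\LP(\textbf{P}^{K, \Do(\textbf{X}^i := \textbf{x})}) \cup \mathcal{E}$ has a unique model whose restriction to $\mathfrak{I}(\mathfrak{P}^e)$ equals $\mathcal{M}(\mathcal{E}, \textbf{P})^e$ and whose restriction to $\mathfrak{I}(\mathfrak{P}^i)$ equals $\mathcal{M}(\mathcal{E}, \LP(\textbf{P}^{\Do(\textbf{X}:=\textbf{x})}))^i$, where the maps $\_^{e}$ and $\_^{i}$ are extended from clauses to structures in the obvious way. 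This holds because the $e$-copy and the $i$-copy share no internal proposition and each copy's clauses refer only to its own internal propositions together with the shared external propositions, which are pinned down by $\mathcal{E}$; hence the supported model factorises over the two copies. Here I would also verify that applying Procedure~\ref{procedure - treatment of external interventions} for $\Do(\textbf{X}^i := \textbf{x})$ inside $\textbf{P}^K$ modifies only the $i$-copy and turns it into the $i$-translation of $\LP(\textbf{P}^{\Do(\textbf{X}:=\textbf{x})})$, while leaving the $e$-copy untouched. Together with Proposition~\ref{proposition - Characterization of Programs with Unique Supported Models} applied to $\textbf{P}$ and to its intervention, this also shows that $\textbf{P}^{K, \Do(\textbf{X}^i := \textbf{x})}$ has unique supported models, so that its FCM-semantics and, by Theorem~\ref{theorem - consistency of FCM-semantics}, its distribution semantics are well-defined and agree.

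Finally I would compute the left-hand side, i.e.\ the quantity produced by Procedure~\ref{procedure - answering conterfactual queries}, namely $\pi_{\textbf{P}^{K, \Do(\textbf{X}^i := \textbf{x})}}(\phi^i \vert \textbf{E}^e = \textbf{e})$, via the marginalisation formula of Definition~\ref{definition - FCM-semantics} as a ratio of sums over $\mathfrak{E}$-structures. Plugging in the decoupling lemma, the event $\textbf{E}^e = \textbf{e}$ becomes $\mathcal{M}(\mathcal{E}, \textbf{P}) \models \textbf{E} = \textbf{e}$ and the event $\phi^i$ becomes $\mathcal{M}(\mathcal{E}, \LP(\textbf{P}^{\Do(\textbf{X}:=\textbf{x})})) \models \phi$, so the resulting ratio is syntactically identical to the expression obtained for the right-hand side, which closes the proof.

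I expect the main obstacle to be the decoupling lemma, and within it the bookkeeping around shared external propositions: one has to argue carefully that sharing a random fact across clauses, and across the two copies, does not create any spurious interaction between the $e$- and $i$-worlds, and that uniqueness of supported models is preserved under the intervention on $\textbf{X}^i$. Phrasing everything in terms of the single $\mathfrak{E}$-structure $\mathcal{E}$, rather than the individual error terms $\error(p)$, is the device I would rely on to keep this step clean, since it aligns Definition~\ref{definition - FCM-semantics} directly with the abduction-action-prediction computation on $\CM(\textbf{P})$.
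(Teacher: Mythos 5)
Your proposal is correct and follows essentially the same route as the paper: the paper's proof is a chain of equalities that unfolds $\pi_{\textbf{P}^{K,\Do(\textbf{X}^i:=\textbf{x})}}^{FCM}(\phi^i \mid \textbf{E}^e=\textbf{e})$ into sums over $\mathfrak{E}$-structures via Definition~\ref{definition - FCM-semantics}, invokes ``modularity of logic programs'' to split the twin program's model into the $e$-copy (a copy of $\mathcal{M}(\mathcal{E},\textbf{P})$) and the $i$-copy (a copy of $\mathcal{M}(\mathcal{E},\LP(\textbf{P}^{\Do(\textbf{X}:=\textbf{x})}))$), and then recognises the resulting ratio as Pearl's abduction--action--prediction computation on $\CM(\textbf{P})$. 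Your decoupling lemma is exactly the paper's modularity step made explicit, and your additional check that the twin program retains unique supported models is a point the paper leaves implicit.
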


\section{Relation to CP-logic}

\cite{cplogic} establishes CP-logic as a causal semantics for the LPAD-programs of \cite{LPAD}. Further, recall \cite{PLP}, §2.4 to see that each \mbox{LPAD-program}~$\textbf{P}$ can be translated to a ProbLog program $\Prob (\textbf{P})$ such that the distribution semantics is preserved. Analogously, we can read each ProbLog program $\textbf{P}$ as an LPAD-Program $\LPAD (\textbf{P})$ with the same distribution semantics as $\textbf{P}$. 

As CP-logic yields a causal semantics it allows us to answer queries about the effect of external interventions. More generally, \mbox{\cite{cp_counterfactuals}} even introduce a counterfactual reasoning on the basis of CP-logic. However, to our knowledge this treatment of counterfactuals  is neither implemented nor shown to be consistent with the formal theory of causality in \mbox{\cite{Causality}}. 

Further, it is a priori unclear whether the expressive equivalence of LPAD and ProbLog programs persists for counterfactual queries.
  In the Appendix, we compare the treatment of counterfactuals under CP-logic and under the FCM-semantics. This yields the following results.

\begin{theorem}[Consistency with CP-Logic - Part 1]
Let $\textbf{P}$ be a propositional LPAD-program such that every selection yields a logic program with unique supported models. Further, let $\textbf{X}$ and $\textbf{E}$ be subsets of propositions with truth value assignments, given by the vectors $\textbf{x}$ and $\textbf{e}$ respectively. Finally, we fix a formula~$\phi$ and denote by~\mbox{$\pi_{\Prob(\textbf{P})/\textbf{P}}^{CP/FCM} (\phi \vert \textbf{E} = e,  \Do (\textbf{X} := \textbf{x}))$} the probability that $\phi$ is true, given that we observe $\textbf{E} = \textbf{e}$ while we had set $\textbf{X} := \textbf{x}$ under CP-logic and the FCM-semantics respectively. In this case, we obtain~\mbox{$
\pi_{\textbf{P}}^{CP} (\phi \vert \textbf{E} = e,  \Do (\textbf{X} := \textbf{x}))
=
\pi_{\Prob(\textbf{P})}^{FCM} (\phi \vert \textbf{E} = e,  \Do (\textbf{X} := \textbf{x})).
$}
\label{theorem - consistency of the ProbLog transformation 1}
\end{theorem}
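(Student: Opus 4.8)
The plan is to reduce both sides of the claimed identity to Pearl's abduction–action–prediction procedure applied to a common causal model, and then to match that model across the two frameworks. On the ProbLog side this reduction is already available: by Procedure~\ref{procedure - answering conterfactual queries} together with Theorem~\ref{Theorem - Corrextness of Kiesel procedure}, the quantity $\pi_{\Prob(\textbf{P})}^{FCM}(\phi \mid \textbf{E}=\textbf{e},\Do(\textbf{X}:=\textbf{x}))$ equals the counterfactual computed from the structural equations $\CM(\Prob(\textbf{P}))$ by abducing on $\textbf{E}=\textbf{e}$, intervening with $\Do(\textbf{X}:=\textbf{x})$, and predicting $\phi$. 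The task is therefore to show that the CP-logic counterfactual of \cite{cp_counterfactuals} carries out these same three steps over an equivalent causal model.

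First I would make the bridge between the two representations precise. Recall from \cite{PLP} \S2.4 the translation $\Prob$: each ground annotated-disjunction clause of $\textbf{P}$ is replaced by fresh external propositions together with guarded clauses. Consequently every $\mathfrak{E}$-structure $\mathcal{E}$ of $\Prob(\textbf{P})$ determines a unique \emph{selection} $\sigma(\mathcal{E})$ of $\textbf{P}$ — a choice, for each ground disjunctive clause, of which head atom (if any) is caused — and $\mathcal{E}$ and $\sigma(\mathcal{E})$ induce the same minimal model on the internal propositions. By the correctness of $\Prob$ and Theorem~\ref{theorem - consistency of FCM-semantics}, the pushforward of the random-fact distribution along $\sigma$ is exactly the selection distribution of $\textbf{P}$, so $\sigma$ is probability-preserving. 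As a preliminary I would also verify, via Proposition~\ref{proposition - Characterization of Programs with Unique Supported Models}, that the hypothesis ``every selection yields a logic program with unique supported models'' transfers to the statement that $\Prob(\textbf{P})$ has unique supported models, so that $\CM(\Prob(\textbf{P}))$ is a genuine causal model and the right-hand side is well defined.

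Next I would verify that this correspondence commutes with interventions. The action step of CP-logic removes or forces the causal events for the atoms in $\textbf{X}$, whereas Procedure~\ref{procedure - treatment of external interventions} erases every clause with head in $\textbf{X}$ and adds $h\leftarrow$ exactly when $h^{\textbf{x}}=True$. I would show that for every $\mathcal{E}$ the minimal model of the CP-logic program after the action coincides with $\mathcal{M}(\mathcal{E},\LP(\Prob(\textbf{P})^{\Do(\textbf{X}:=\textbf{x})}))$. The delicate point here is an atom occurring in several disjunctive heads: I must confirm that cutting all of its defining causal events in CP-logic matches erasing all clauses with that head in the translation, including the correct treatment of the ``nothing is caused'' alternative of an annotated disjunction.

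Finally I would assemble the counterfactual. The feature shared by the twin-network method of \cite{TwinNetwork} underlying Procedure~\ref{procedure - answering conterfactual queries} and by the CP-logic counterfactual is that the \emph{same} $\mathfrak{E}$-structure $\mathcal{E}$ — hence, via $\sigma$, the same selection — drives both the factual world, in which we read off $\textbf{E}=\textbf{e}$, and the counterfactual world, in which we apply $\Do(\textbf{X}:=\textbf{x})$ and evaluate $\phi$. Abduction reweights selections by $\textbf{E}=\textbf{e}$ identically in both frameworks (by the probability-preserving $\sigma$), the action step agrees by the previous paragraph, and the prediction of $\phi$ then agrees selection by selection; summing over selections yields the claimed equality. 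I expect the main obstacle to be precisely the faithful translation of the CP-logic counterfactual definition — phrased in terms of probability trees and causal processes rather than structural equations — into this abduction–action–prediction template, and in particular showing that CP-logic's notion of intervention on a head atom matches Procedure~\ref{procedure - treatment of external interventions} in the presence of overlapping disjunctive heads.
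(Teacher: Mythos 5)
Your plan is correct and follows essentially the same route as the paper: the paper's proof likewise rests on the selection-to-possible-world correspondence of the $\Prob$ translation (its Theorem~\ref{theorem - correctness of the ProbLog translation}), the commutation of that correspondence with interventions (its Lemma~\ref{lemma - the ProbLog transformation after external interventions}), and the identification of leaves of a CP-logic execution model with selections (its Lemma~\ref{lemma - meaning of leafs in an execution model}), after which both sides reduce to the same sum over possible worlds as in the proof of Theorem~\ref{Theorem - Corrextness of Kiesel procedure}. The subtlety you flag about overlapping disjunctive heads is resolved in the paper exactly as you anticipate, by the modularity of logic programs under clause removal.
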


\begin{theorem}[Consistency with CP-Logic - Part 2]
If we reconsider the situation of Theorem \ref{theorem - consistency of the ProbLog transformation 1} and assume that $\textbf{P}$ is a ProbLog program with unique supported models, we obtain
\mbox{$
\pi_{\LPAD(\textbf{P})}^{CP} (\phi \vert \textbf{E} = e,  \Do (\textbf{X} := \textbf{x}))
=
\pi_{\textbf{P}}^{FCM} (\phi \vert \textbf{E} = e,  \Do (\textbf{X} := \textbf{x})).
$}
\label{theorem - consistency of the ProbLog transformation 2}
\end{theorem}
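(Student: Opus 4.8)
The plan is to reduce Part~2 to the already-established Part~1 (Theorem~\ref{theorem - consistency of the ProbLog transformation 1}) by applying the latter to the LPAD-reading $\LPAD(\textbf{P})$ of our ProbLog program $\textbf{P}$, and then to close the loop by showing that the round-trip translation $\Prob(\LPAD(\textbf{P}))$ agrees with $\textbf{P}$ at the level of the FCM-semantics, not merely at the level of the distribution semantics.

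First I would check that $\LPAD(\textbf{P})$ satisfies the hypothesis of Theorem~\ref{theorem - consistency of the ProbLog transformation 1}, namely that every selection yields a logic program with unique supported models. Since $\LPAD(\textbf{P})$ turns each random fact $\pi(RF) \dd u(RF)$ of $\textbf{P}$ into a single-atom annotated disjunction and leaves the clauses of $\LP(\textbf{P})$ deterministic, a selection of $\LPAD(\textbf{P})$ is precisely a choice of an $\mathfrak{E}$-structure $\mathcal{E}$, and the selected logic program is exactly $\LP(\textbf{P}) \cup \mathcal{E}$. By Proposition~\ref{proposition - Characterization of Programs with Unique Supported Models}, the assumption that $\textbf{P}$ has unique supported models guarantees a unique model of $\LP(\textbf{P}) \cup \mathcal{E}$ for every $\mathcal{E}$, which is the required selection hypothesis; the same characterization, now under an intervention $\textbf{X} := \textbf{x}$, is what Part~1 invokes internally. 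Applying Theorem~\ref{theorem - consistency of the ProbLog transformation 1} to $\LPAD(\textbf{P})$ then yields
$$
\pi_{\LPAD(\textbf{P})}^{CP}(\phi \vert \textbf{E} = \textbf{e}, \Do(\textbf{X} := \textbf{x}))
=
\pi_{\Prob(\LPAD(\textbf{P}))}^{FCM}(\phi \vert \textbf{E} = \textbf{e}, \Do(\textbf{X} := \textbf{x})).
$$

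It then remains to prove the round-trip identity
$$
\pi_{\Prob(\LPAD(\textbf{P}))}^{FCM}(\phi \vert \textbf{E} = \textbf{e}, \Do(\textbf{X} := \textbf{x}))
=
\pi_{\textbf{P}}^{FCM}(\phi \vert \textbf{E} = \textbf{e}, \Do(\textbf{X} := \textbf{x})),
$$
which I expect to be the main obstacle. The difficulty is that counterfactual queries are sensitive to the whole causal model $\CM(\cdot)$ — the system of equations together with its error terms — and not just to the marginal distribution, so the fact that $\Prob$ and $\LPAD$ each preserve the distribution semantics is not by itself enough. I would therefore analyse the composite $\Prob \circ \LPAD$ at the level of programs: a random fact of $\textbf{P}$ becomes a single-disjunct annotated fact under $\LPAD$ and is mapped back to a probabilistic fact (at worst a fresh external variable together with a defining clause that makes an auxiliary internal proposition logically equivalent to it) under $\Prob$, while every deterministic clause is carried unchanged through both steps. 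Consequently $\CM(\Prob(\LPAD(\textbf{P})))$ and $\CM(\textbf{P})$ coincide as systems of equations over $\mathfrak{I}(\mathfrak{P})$, differing at most by such inessential auxiliary internal propositions, each defined as a copy of its own fresh external variable.

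To finish, I would argue that these auxiliary indirections affect none of the three steps — abduction, action, prediction — of the counterfactual computation when it is restricted to the original propositions $\mathfrak{P}$: the auxiliary variables are neither observed, nor intervened upon, nor occurring in $\phi$, and marginalising them out recovers exactly the error-term distribution and the equation system of $\CM(\textbf{P})$. Hence the twin-network construction underlying Procedure~\ref{procedure - answering conterfactual queries} returns the same conditional probabilities for $\Prob(\LPAD(\textbf{P}))$ as for $\textbf{P}$, establishing the round-trip identity. Chaining it with the application of Part~1 above yields the claimed equality $\pi_{\LPAD(\textbf{P})}^{CP}(\phi \vert \textbf{E} = \textbf{e}, \Do(\textbf{X} := \textbf{x})) = \pi_{\textbf{P}}^{FCM}(\phi \vert \textbf{E} = \textbf{e}, \Do(\textbf{X} := \textbf{x}))$.
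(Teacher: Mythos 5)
Your strategy is genuinely different from the paper's. The paper proves Part~2 directly and symmetrically to Part~1: it uses Theorem~\ref{theorem - correctness of the LPAD translation}, Lemma~\ref{lemma - meaning of leafs in an execution model} and part~(ii) of Lemma~\ref{lemma - the ProbLog transformation after external interventions} to rewrite the CP-logic sum~(\ref{formula - counterfactual cp-logic}) for $\LPAD(\textbf{P})$ as the sum of $\pi(\mathcal{E} \vert \textbf{E} = \textbf{e})$ over exactly those possible worlds $\mathcal{E}$ of $\textbf{P}$ with $\mathcal{M}(\mathcal{E}, \LP(\textbf{P}^{\Do(\textbf{X}:=\textbf{x})})) \models \phi$ and $\mathcal{M}(\mathcal{E}, \LP(\textbf{P})) \models (\textbf{E}=\textbf{e})$, and then identifies this value with the one computed in the proof of Theorem~\ref{Theorem - Corrextness of Kiesel procedure}. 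You instead apply Part~1 to $\LPAD(\textbf{P})$ and close the loop with a round-trip identity $\pi^{FCM}_{\Prob(\LPAD(\textbf{P}))} = \pi^{FCM}_{\textbf{P}}$. This is a legitimate, non-circular reduction that buys you reuse of Part~1, at the cost of a new statement about the FCM-semantics of the composite translation; the paper avoids the round trip entirely because Lemma~\ref{lemma - the ProbLog transformation after external interventions} was already proved for both translations at once.

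The round-trip identity is where your proof still owes a real argument, and your description of the composite is not accurate. Deterministic clauses are \emph{not} carried through unchanged: $h \leftarrow B$ becomes $h:1 \leftarrow B$ under $\LPAD$ and then $h_1^{RC} \leftarrow B, u_1(RC)$ together with $h \leftarrow h_1^{RC}$ and the fact $1::u_1(RC)$ under $\Prob$, so every clause acquires an auxiliary head atom and an always-true guard. More importantly for counterfactuals, each external proposition $u$ of $\textbf{P}$ becomes an \emph{internal} proposition of $\Prob(\LPAD(\textbf{P}))$, defined from a fresh external $u_1(RC)$. In the twin-network construction of Procedure~\ref{procedure - answering conterfactual queries} internal propositions are duplicated into $e$- and $i$-copies while external ones are shared, so you must check explicitly that $u^e$ and $u^i$ are both forced equal to the shared $u_1(RC)$ and hence to each other; this is what preserves the counterfactual coupling between the two copies, and ``marginalising out the auxiliaries'' does not by itself address it. Finally, your identification of selections of $\LPAD(\textbf{P})$ with $\mathfrak{E}$-structures holds only for selections of nonzero probability (a selection may also drop a deterministic clause), which matters for verifying the literal hypothesis of Part~1 even though it is semantically harmless. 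With these points repaired, your argument goes through.
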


\begin{remark}
We can also apply Procedure \ref{procedure - answering conterfactual queries} to programs with stratified negation. In this case the proofs of Theorems \ref{theorem - consistency of the ProbLog transformation 1} and \ref{theorem - consistency of the ProbLog transformation 2}  do not need to be modified in order to yield the same statement. However, recalling Definition \ref{defnition - causal models}, we see that there is no theory of counterfactual reasoning for those programs. Hence, to us it is not clear how to interpret  the results of Procedure \ref{procedure - answering conterfactual queries} for programs that do not possess unique supported models.
\end{remark}

In Theorem \ref{theorem - consistency of the ProbLog transformation 1} and \ref{theorem - consistency of the ProbLog transformation 2}, we show that under the translations $\Prob(\_)$ and $\LPAD(\_)$ CP-logic for  \mbox{LPAD-programs} is equivalent to our \mbox{FCM-semantics}, which itself by Theorem 
\ref{Theorem - Corrextness of Kiesel procedure} is consistent 
with the formal theory of Pearl's causality. In this way, we fill 
the gap of showing that the causal reasoning provided for 
CP-logic is actually correct. Further, Theorem \ref{theorem - consistency of the ProbLog transformation 1} and \ref{theorem - consistency of the ProbLog transformation 2} show that the translations $\Prob(\_)$ and $\LPAD (\_)$ of \cite{PLP}, §2.4 do not only respect the distribution semantics but are also equivalent for more general causal queries.

\section{Practical Evaluation}
We have seen that we can solve counterfactual queries by performing marginal inference over a rewritten probabilistic logic program with evidence. Most of the existing solvers for marginal inference, including \problog~\citep{ProbLog2system}, \aspmc~\citep{CycleBreaking}, and \pita~\citep{riguzzi2011pita}, can handle probabilistic queries with evidence in one way or another. Therefore, our theoretical results also immediately enable the use of these tools for efficient evaluation in practice. 

\paragraph{\textbf{Knowledge Compilation for Evaluation}}
The currently most successful strategies for marginal inference make use of Knowledge Compilation (KC). They compile the logical theory underlying a probabilistic logic program into a so called \textbf{tractable circuit representation}, such as binary decision diagrams (BDD), sentential decision diagrams (SDD)~\citep{darwiche2011sdd} or smooth deterministic decomposable negation normal forms (sd-DNNF). While the resulting circuits may be much larger (up to exponentially in the worst case) than the original program, they come with the benefit that marginal inference for the original program is possible in polynomial time in their size \citep{darwiche2002knowledge}.

When using KC, we can perform compilation either \textbf{bottom-up} or \textbf{top-down}. In bottom-up KC, we compile SDDs representing the truth of internal atoms in terms of only the truth of the external atoms. After combining the SDDs for the queries with the SDDs for the evidence, we can perform marginal inference on the results~\citep{ProbLog2system}. 

For top-down KC we introduce auxiliary variables for internal atoms, translate the program into a CNF and compile an sd-DNNF for the whole theory. Again, we can perform marginal inference on the result~\citep{CycleBreaking}.

\paragraph{\textbf{Implementation}}
As the basis of our implementation, we make use of the solver library \aspmc. It supports parsing, conversion to CNF and top-down KC including a KC-version of \textsc{sharpSAT}\footnote{\href{https://github.com/raki123/sharpsat-td/}{github.com/raki123/sharpsat-td}} based on the work of \cite{korhonen2021integrating}. Additionally, we added (i) the program transformation that introduces the duplicate atoms for the evidence part and the query part, and (ii) allowed for counterfactual queries based on it. 

Furthermore, to obtain empirical results for bottom-up KC, we use PySDD\footnote{\href{https://github.com/wannesm/PySDD}{github.com/wannesm/PySDD}}, which is a python wrapper around the SDD library of \cite{choi2013dynamic}. This is also the library that \problog uses for bottom-up KC to SDDs. 
\section{Empirical Evaluation}
\label{sec:evaluation}
Here, we consider the scaling of evaluating counterfactual queries by using our translation to marginal inference. This can depend on (i) the number of atoms and rules in the program, (ii) the complexity of the program structure, and (iii) the number and type of interventions and evidence.

We investigate the influence of these parameters on both the bottom-up and top-down KC. Although top-down KC as in \aspmc can be faster~\citep{CycleBreaking} on usual marginal queries, results for bottom-up KC are relevant nevertheless since it is heavily used in \problog and \PITA.

Furthermore, it is a priori not clear that the performance of these approaches on usual instances of marginal inference translates to the marginal queries obtained by our translation. Namely, they exhibit a lot of symmetries as we essentially duplicate the program as a first step of the translation. Thus, the scaling of both approaches and a comparison thereof is of interest. 

\subsection{Questions and Hypotheses}
The first question we consider addresses the scalability of the bottom-up and top-down approaches in terms of the size of the program and the complexity of the program structure.

\smallskip
\noindent\textbf{Q1.\ Size and Structure}: What size and complexity of counterfactual query instances can be solved with bottom-up or top-down compilation?
\smallskip

Here, we expect similar scaling as for marginal inference, since evaluating one query is equivalent to performing marginal inference once. While we duplicate the atoms that occur in the instance, thus increasing the hardness, we can also make use of the evidence, which can decrease the hardness, since we can discard models that do not satisfy the evidence. 

Since top-down compilation outperformed bottom-up compilation on marginal inference instances in related work~\citep{CycleBreaking}, we expect that the top-down approach scales better than the bottom-up approach.

Second, we are interested in the influence that the number of intervention and evidence atoms has, in addition to whether it is a positive or negative intervention/evidence atom.

\smallskip
\noindent\textbf{Q2.\ Evidence and Interventions}: How does the number and type of evidence and intervention atoms influence the performance?
\smallskip

We expect that evidence and interventions can lead to simplifications for the program. However, it is not clear whether this is the case in general, whether it only depends on the number of evidence/intervention atoms, and whether there is a difference between negative and positive evidence/intervention atoms.
\subsection{Setup}
We describe how we aim to answer the questions posed in the previous subsection.
\paragraph{Benchmark Instances}
As instances, we consider acyclic directed graphs $G$ with distinguished start and goal nodes $s$ and $g$. Here, we use the following probabilistic logic program to model the probability of reaching a vertex in $G$:
\begin{minted}{prolog}
r(s).  0.1::trap(Y) :- p(X,Y).  r(Y) :- p(X,Y).
1/d(X)::p(X,s_1(X)); ...; 1/d(X)::p(X,s_d(X)):- r(X), \+ trap(X).
\end{minted}
Here, \mintinline{prolog}{d(X)} refers to the number of outgoing arcs of $X$ in $G$, and \mintinline{prolog}{s_1(X), ..., s_d(X)} refer to its direct descendants. We obtain the final program by replacing the variables $X,Y$ with constants corresponding to the vertices of $G$.

This program models that we reach (denoted by \mintinline{prolog}{r(.)}) the starting vertex $s$ and, at each vertex~$v$ that we reach, decide uniformly at random which outgoing arc we include in our path (denoted by \mintinline{prolog}{p(.,.)}). If we include the arc $(v,w)$, then we reach the vertex $w$. However, we only include an outgoing arc, if we do not get trapped (denoted by \mintinline{prolog}{trap(.)}) at $v$.

This allows us to pose counterfactual queries regarding the probability of reaching the goal vertex $g$ by computing 
\[
\pi^{FCM}_{\mathbf{P}}(r(g)| (\neg) r(v_1), ..., (\neg) r(v_n), \Do((\neg) r(v_1')), ..., \Do((\neg) r(v_m')))
\]
for some positive or negative evidence of reaching $v_1, \dots, v_n$ and some positive or negative interventions on reaching $v_1', \dots, v_m'$. 


In order to obtain instances of varying sizes and difficulties, we generated acyclic digraphs with a controlled size and treewidth. 
Broadly speaking, treewidth has been identified as an important parameter related to the hardness of marginal inference \citep{CycleBreaking,korhonen2021integrating} since it bounds the structural hardness of programs, by giving a limit on the dependencies between atoms.

Using two parameters $n, k \in \mathbb{N}$, we generated programs of size linear in $n$ and $k$ and treewidth $\min(k,n)$ as follows. We first generated a random tree of size $n$ using networkx. As a tree it has treewidth $1$. To obtain treewidth $\min(k,n)$, we added $k$ vertices with incoming arcs from each of the $n$ original vertices in the tree.\footnote{Observe that every vertex in the graph has at least degree $\min(n,k)$, which is known to imply treewidth $\geq \min(n,k)$.} Finally, we added one vertex as the goal vertex, with incoming arcs from each of the $k$ vertices. As the start we use the root of the tree.

\paragraph{Benchmark Platform}
All our solvers ran on a cluster consisting of 12 nodes. 
Each node of the cluster is equipped
with two Intel Xeon E5-2650 CPUs, where each of these 12 physical cores runs
at 2.2 GHz clock speed and has access to 256 GB shared RAM. %
Results are gathered on Ubuntu~16.04.1 LTS powered on Kernel~4.4.0-139 with hyperthreading disabled using version 3.7.6 of Python3.

\paragraph{Compared Configurations}
We compare the two different configurations of our solver \oursolver (version 1.0.0, published at \href{https://github.com/raki123/counterfactuals}{github.com/raki123/counterfactuals}). Namely, bottom-up compilation with PySDD and top-down compilation with \textsc{sharpSAT}. 
Only the compilation and the following evaluation step differ between the two configurations, the rest stays unchanged.
\paragraph{Comparisons}
For both questions, we ran both configurations of our solver using a memory limit of 8GB and a time limit of 1800 seconds. If either limit was reached, we assigned the instance a time of 1800 seconds.
\paragraph{Q1.\ Size and Structure}
For the comparison of scalability with respect to size and structure, we generated one instance for each combination of $n = 20,30, \dots, 230$ and $k = 1,2, \dots, 25$. We then randomly chose an evidence literal from the internal literals $(\neg)\; r(v)$. If possible, we further chose another such evidence literal consistent with the previous evidence. For the interventions we chose two internal literals $(\neg)\; r(v)$ uniformly at random.
\paragraph{Q2.\ Evidence and Interventions}
For Q2, we chose a medium size ($n = 100$) and medium structural hardness ($k = 15$) and generated different combinations of evidence and interventions randomly on the same instance. Here, for each $e, i \in \{-5, \dots, 0, \dots, 5\}$ we consistently chose $|e|$ evidence atoms that were positive, if $e > 0$, and negative, otherwise. Analogously we chose $|i|$ positive/negative intervention atoms.

\subsection{Results and Discussion}
We discuss the results (also available at \href{https://github.com/raki123/counterfactuals/tree/final_results}{github.com/raki123/counterfactuals/tree/final\_results}) of the two experimental evaluations.
\paragraph{Q1.\ Size \& Structure}
\begin{figure}[tbph]
    \centering
    \begin{subfigure}[t]{0.49\textwidth}
    \centering
    \hspace{-0.5cm}\includegraphics[width=1.05\textwidth]{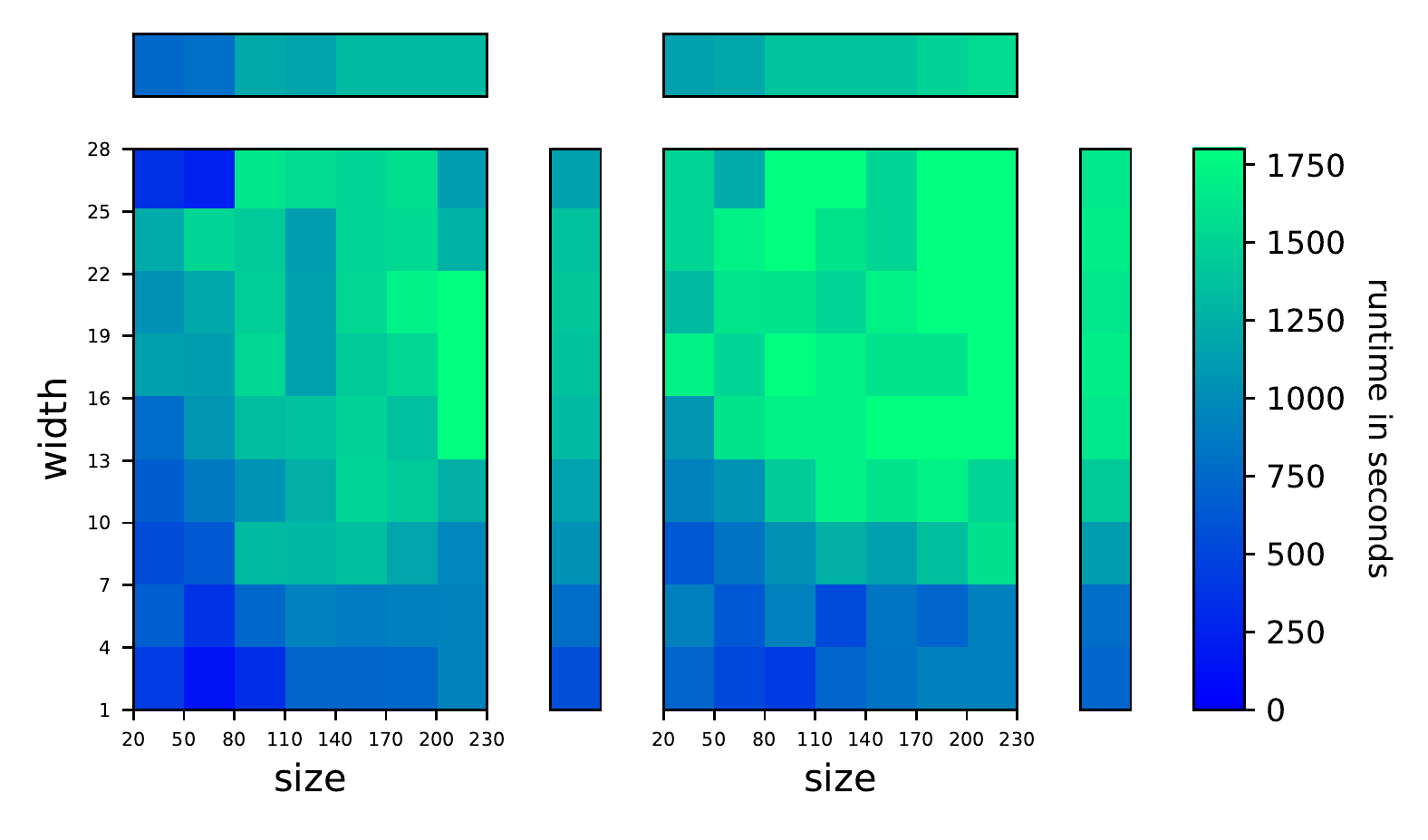}
    \caption{Two plots showing the average runtime using bottom-up (right) and top-down (left) compilation. The x-axis denotes the size $n$ and the y-axis denoted the width $k$. For each square in the main plots the color of the square denotes the average runtime of all instances in the covered range. The extra plot on the top (resp.\ right side) denote the average for the size range (resp.\ width range) over all widths (resp.\ sizes).}
    \label{fig:scaling}
    \end{subfigure}
    \hspace{0.01\textwidth}
    \begin{subfigure}[t]{0.45\textwidth}
    \centering
    \includegraphics[width=\textwidth]{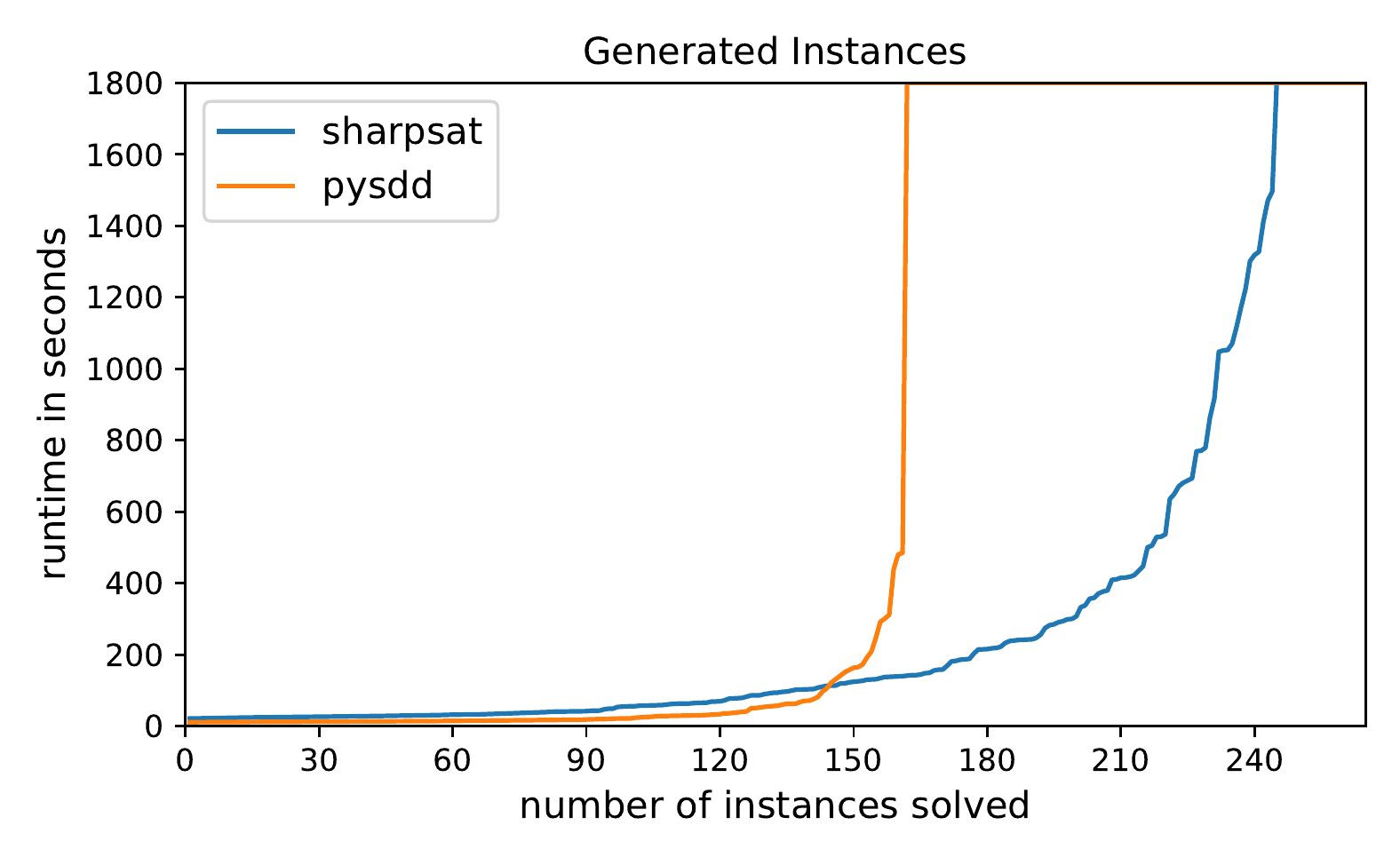}
    \caption{The performance of the top-down (denoted as sharpsat) and bottom-up compilation (denoted as pysdd) on counterfactual queries regarding graph traversal. The x-axis denotes the number of solved instances and the y-axis denotes the runtime in seconds.}
    \label{fig:overall}
    \end{subfigure}
    \caption{Results for Q1.}
    \label{fig:scale}
\end{figure}
The scalability results for size and structure are shown in Figure \ref{fig:scale}.

In Figure \ref{fig:overall}, we see the overall comparison of bottom-up and top-down compilation. Here, we see that top-down compilation using \textsc{sharpSAT} solves significantly more instances than bottom-up compilation with \textsc{PySDD}. This aligns with similar results for usual marginal inference~\citep{CycleBreaking}. Thus, it seems like top-down compilation scales better overall.

In Figure \ref{fig:scaling}, we see that the average runtime depends on both the size and the width for either KC approach. This is especially visible in the subplots on top (resp.\ right) of the main plot containing the average runtime depending on the size (resp.\ width). While there is still a lot of variation in the main plots between patches of similar widths and sizes, the increase in the average runtime with respect to both width and size is rather smooth. 

As expected, given the number of instances solved overall, top-down KC scales better to larger instances than bottom-up KC with respect to both size and structure. Interestingly however, for bottom-up KC the width seems to be of higher importance than for top-down KC. This can be observed especially in the average plots on top and to the right of the main plot again, where the change with respect to width is much more rapid for bottom-up KC than for top-down KC. For bottom-up KC the average runtime goes from $\sim$500s to $\sim$1800s within the range of widths between 1 and 16, whereas for top-down KC it stays below $\sim$1500s until width 28. For the change with respect to size on the other hand, both bottom-up and top-down KC change rather slowly, although the runtime for bottom-up KC is generally higher.

\paragraph{Q2.\ Number \& Type of Evidence/Intervention}
The results for the effect of the number and types of evidence and intervention atoms are shown in Figure \ref{fig:ei_scale}.
\begin{figure}[tbph]
    \centering
    \includegraphics[width=0.7\textwidth]{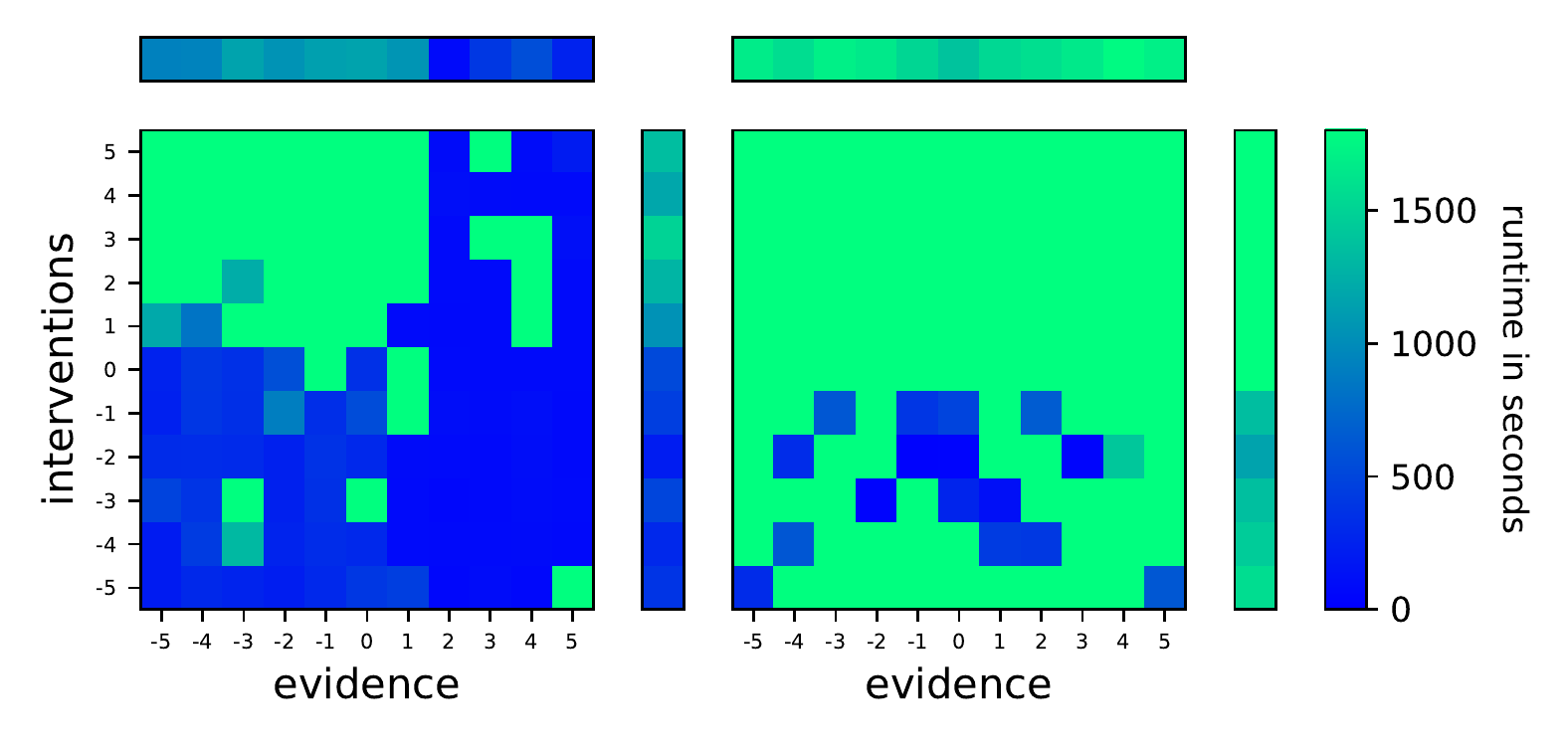}    
    \caption{Two plots showing the runtime using bottom-up (right) and top-down (left) compilation with varying evidence and intervention. The x-axis denotes the signed number of interventions, i.e., $-n$ corresponds to $n$ negative interventions and $n$ corresponds to $n$ positive interventions. The y-axis denotes the signed number of evidence atoms using analogous logic. For each square in the main plots the color of the square denotes the runtime of the instance with those parameters. The extra plot on the top (resp.\ right side) denote the average for the number and type of evidences (resp.\ interventions) over all interventions (resp.\ evidences).
    }
    \label{fig:ei_scale}
\end{figure}

Here, for both bottom-up and top-down KC, we see that most instances are either solvable rather easily (i.e., within 500 seconds) or not solvable within the time limit of 1800 seconds. Furthermore, in both cases negative interventions, i.e., interventions that make an atom false, have a tendency to decrease the runtime, whereas positive interventions, i.e., interventions that make an atom true, can even increase the runtime compared to a complete lack of interventions. 

However, in contrast to the results for Q1, we observe significantly different behavior for bottom-up and top-down KC. While positive evidence can vastly decrease the runtime for top-down compilation such that queries can be evaluated within 200 seconds, even in the presence of positive interventions, there is no observable difference between negative and positive evidence for bottom-up KC. 
Additionally, top-down KC seems to have a much easier time exploiting evidence and interventions to decrease the runtime.

We suspect that the differences stem from the fact that top-down KC can make use of the restricted search space caused by evidence and negative interventions much better than bottom-up compilation. Especially for evidence this makes sense: additional evidence atoms in bottom-up compilation lead to more SDDs that need to be compiled; however, they are only effectively used to restrict the search space when they are conjoined with the SDD for the query in the last step.  On the other hand, top-down KC can simplify the given propositional theory \emph{before} compilation, which can lead to a much smaller theory to start with and thus a much lower runtime. 

The question why only negative interventions seem to lead to a decreased runtime for either strategy and why the effect of positive evidence is much stronger than that of negative evidence for top-down KC is harder to explain. 

On the specific benchmark instances that we consider, negative interventions only remove rules, since all rule bodies mention $r(x)$ positively. On the other hand, positive interventions only remove the rules that entail them, but make the rules that depend on them easier to apply. 

As for the stronger effect of positive evidence, it may be that there are fewer situations in which we derive an atom than there are situations in which we do not derive it. This would in turn mean that the restriction that an atom was true is stronger and can lead to more simplification. This seems reasonable on our benchmark instances, since there are many more paths through the generated networks that avoid a given vertex, than there are paths that use it.

Overall, this suggests that evidence is beneficial for the performance of top-down KC. Presumably, the performance benefit is less tied to the number and type of evidence atoms itself and more tied to the strength of the restriction caused by the evidence. For bottom-up KC, evidence seems to have more of a negative effect, if any.

While in our investigation interventions caused a positive or negative effect depending on whether they were negative or positive respectively, it is likely that in general their effect depends less on whether they are positive or negative. Instead, we assume that interventions that decrease the number of rules that can be applied are beneficial for performance, whereas those that make additional rules applicable (by removing an atom from the body) can degrade the performance. 

\section{Conclusion}
The main result in this contribution is the treatment of counterfactual queries for ProbLog programs with unique supported models given by Procedure \ref{procedure - answering conterfactual queries} together with the proof of its correctness in Theorem \ref{Theorem - Corrextness of Kiesel procedure}. We also provide an implementation of Procedure \ref{procedure - answering conterfactual queries} that allows us to investigate the scalability of counterfactual reasoning in Section \ref{sec:evaluation}. This investigation reveals that typical approaches for marginal inference can scale to programs of moderate sizes, especially if they are not too complicated structurally. Additionally, we see that evidence typically makes inference easier but only for top-down KC, whereas interventions can make inference easier for both approaches but interestingly also lead to harder problems. Finally, Theorem \ref{theorem - consistency of the ProbLog transformation 1} and \ref{theorem - consistency of the ProbLog transformation 2} show that our approach to counterfactual reasoning is consistent with CP-logic for LPAD-programs. Note that this consistency result is valid for arbitrary programs with stratified negation. However, there is no theory for counterfactual reasoning in these programs. In our opinion, interpreting the results of Procedure \ref{procedure - answering conterfactual queries} for more general programs yields an interesting direction for future work.  

 \paragraph{\textbf{Acknowledgements}} This publication was supported by LMUexcellent, funded by the Federal
Ministry of Education and Research (BMBF) and the Free State of Bavaria under the Excellence Strategy of the Federal Government and the Länder.
\bibliographystyle{tlplike}
\bibliography{bib.bib}

\newpage

\appendix
\label{appendix}

\section{Proof of Theorem \ref{Theorem - Corrextness of Kiesel procedure}}
$$
\pi_{\textbf{P}}^{FCM} (\phi \vert \textbf{E} = \textbf{e} , \Do (\textbf{X} = \textbf{x}))
\stackrel{Procedure~\ref{procedure - answering conterfactual queries}}{=}
\pi_{\textbf{P}^{K, \Do (\textbf{X}^i = \textbf{x})}}^{FCM} (\phi^i \vert \textbf{E}^e = \textbf{e}) 
\stackrel{\substack{\text{Definition of} \\ \text{conditional} \\ \text{probability}}}{=}
$$
$$
= \dfrac{\pi_{\textbf{P}^{K, \Do (\textbf{X}^i = \textbf{x})}}^{FCM} (\phi^i \land \textbf{E}^e = \textbf{e})}{
\pi_{\textbf{P}^{K, \Do (\textbf{X}^i = \textbf{x})}}^{FCM} (\textbf{E}^e = \textbf{e})} \stackrel{\text{Definition \ref{definition - FCM-semantics}}}{=}
$$
$$
=
\dfrac{1}{\pi_{\textbf{P}^{K, \Do (\textbf{X}^i = \textbf{x})}}^{FCM} (\textbf{E}^e = \textbf{e})} 
\sum_{\substack{\mathcal{E}~possible~world \\ \mathcal{M}(\mathcal{E}, \textbf{P}^{K, \Do (\textbf{X}^i = \textbf{x})})\models \phi^i \\
\mathcal{M}(\mathcal{E}, \textbf{P}^{K, \Do (\textbf{X}^i = \textbf{x})})\models (\textbf{E}^e = \textbf{e})}
} 
\pi_{\textbf{P}^{K, \Do (\textbf{X}^i = \textbf{x})}}^{FCM} (\mathcal{E}) \stackrel{\substack{\text{Modularity of} \\ \text{logic programs}}}{=}
$$
$$
=
\dfrac{1}{\pi_{\textbf{P}}^{FCM} (\textbf{E} = \textbf{e})} 
\sum_{\substack{\mathcal{E}~possible~world \\ \mathcal{M}(\mathcal{E}, \textbf{P}^{\Do (\textbf{X} = \textbf{x})}) \models \phi \\
\mathcal{M}(\mathcal{E}, \textbf{P})\models (\textbf{E} = \textbf{e})}
} 
\pi_{\textbf{P}}^{FCM} (\mathcal{E})
 \stackrel{\substack{\text{Definition of} \\ \text{conditional} \\ \text{probability}}}{=}
\sum_{\substack{\mathcal{E}~possible~world \\ \mathcal{M}\left( \mathcal{E}, \textbf{P}^{\Do (\textbf{X} = \textbf{x})} \right) \models \phi}
} 
\pi_{\textbf{P}}^{FCM} (\mathcal{E} \vert \textbf{E} = \textbf{e})
\stackrel{\text{§} \ref{subsec : Pearls Functional Causal Models}}{=}
$$
$$
=
\pi_{\CM (\textbf{P})} (\phi \vert \textbf{E} = \textbf{e} , \Do (\textbf{X} = \textbf{x}))
$$

\section{Proof of Theorem \ref{theorem - consistency of the ProbLog transformation 1} and Theorem \ref{theorem - consistency of the ProbLog transformation 2}}

In order to prove that our treatment of counterfactual queries is consistent with CP-logic we begin with recalling the theory from \cite{cplogic}. To this aim we fix a set of propositions~$\mathfrak{P}$ and introduce the LPAD-programs of \cite{LPAD} with their standard semantics.

\begin{definition}[Logic Program with Annotated Disjunction]
We call an expression of the form 
$$
RC :=  h_1 : \pi_1;...;h_{l} : \pi_{l} \leftarrow b_1, ..., b_n
$$
a \textbf{clause with annotated disjunctions} or \textbf{LPAD-clause} if the following assertions are satisfied:
\begin{enumerate}
    \item[i)] We have that $\head(RC) := ( h_1,...,h_l )$ is a tupel of propositions called the \textbf{head} of~$RC$. We write $h \in (h_1,...,h_l)$ if $h = h_i$ for a $1 \leq i \leq l$. Further, we write $l(RC):=l$ and $h_i(RC) := h_i$ for $1 \leq i \leq l$.  
    \item[ii)] We have that $\body(RC) := \{ b_1,...,b_n \}$ is a finite set of literals called the \textbf{body} of~$RC$.
    \item[iii)] We have that for all~\mbox{$1 \leq i \leq l$} the \textbf{probability} of the head atom $h_i$ is given by a number~\mbox{$\pi_i(RC) := \pi_i \in [0,1]$} such that $\sum_i \pi_i \leq 1$.
\end{enumerate}
  Further, a \textbf{logic program with annotated disjunctions} or \textbf{LPAD-program} is given by a finite set of LPAD-clauses \textbf{P}. A \textbf{selection} of a LPAD-program $\textbf{P}$ is a function~\mbox{$\sigma : \textbf{P} \rightarrow \mathbb{N} \cup \{ \perp \}$} (where~\mbox{$\perp \not\in \mathbb{N}$}) that assigns to each LPAD-clause $RC \in \textbf{P}$ a natural number \mbox{$1 \leq \sigma(RC) \leq l(RC)$} or $\sigma(RC) := \perp$. To each selection $\sigma$ we associate a probability
$$
\pi (\sigma) := 
\prod_{\substack{RC \in \textbf{P} \\ \sigma(RC) \in \mathbb{N}}} \pi_{\sigma(RC)}(RC)
\prod_{\substack{RC \in \textbf{P} \\ \sigma(RC) = \perp}} \left( 1 - \sum_{i=1}^{l(RC)} \pi_{i}(RC) \right)
$$
and a logic program 
$$
\textbf{P}^{\sigma} := 
\left\{ 
h_{\sigma(RC)} \leftarrow \body(RC) \text{ : } RC \in \textbf{P},~\sigma (RC) \neq \perp
\right\}.
$$
Finally, we associate to each $\mathfrak{P}$-formula $\phi$ the probability 
$$
\pi_{\textbf{P}}^{dist} (\phi) :=
\sum_{\substack{\sigma~\text{selection} \\ 
\textbf{P}^{\sigma} \models \phi}} \pi (\sigma).
$$
We call $\pi_{\textbf{P}}^{dist}$ the \textbf{distribution semantics} of the LPAD-program $\textbf{P}$.
\end{definition}

Note that each LPAD-program \textbf{P} can be translated into a ProbLog program $\Prob(\textbf{P})$ that yields the same distribution semantics.

\begin{definition}[\cite{PLP}, §2.4]
Let $\textbf{P}$ be a LPAD-program in $\mathfrak{P}$ and choose for every LPAD-clause $RC \in \textbf{P}$ and for every natural number $1 \leq i \leq l(RC)$ distinct propositions~$h_i^{RC}, u_i(RC) \not\in \mathfrak{P}$. The \textbf{ProbLog transformation}~$\Prob(\textbf{P})$ of the LPAD-program $\textbf{P}$ is the ProbLog program that is given by the logic program~$\LP (\Prob (\textbf{P}))$, which constists of the clauses
\begin{align*}
&h_i^{RC} \leftarrow \body (RC) \cup 
\{ \neg h_j^{RC} \vert 1 \leq j < i \} \cup 
\{ u_i(RC) \} \\
&h_i \leftarrow h_i^{RC}
\end{align*}
for every LPAD-clause $RC \in \textbf{P}$ and for every $1 \leq i \leq l(RC)$ as well as the random facts 
$$
\Facts (\Prob (\textbf{P})) :=
\left\{ 
\dfrac{\pi_i(RC)}{1 - \prod_{1 \leq j < i} \pi_j(RC)} :: u_i(RC) \vert RC \in \textbf{P},~1 \leq i \leq l(RC)  
\right\}.
$$
\end{definition}

Indeed, we obtain the following result.

\begin{theorem}[\cite{PLP}, §2.4]
    Let $\textbf{P}$ be a LPAD-program. In this case, we obtain 
    for every selection $\sigma$ of $\textbf{P}$ a set of possible worlds $\mathcal{E}(\sigma)$, which consists of all possible worlds $\mathcal{E}$ such that $\neg u_i(RC)$ holds unless $\sigma(RC) \neq \perp$ or~\mbox{$i > \sigma(RC)$} and such that $u_{\sigma(RC)}(RC)$ holds for every $RC \in \textbf{P}$ with $\sigma(RC) \neq \perp$. We obtain that~$\textbf{P}^{\sigma}$ yields the same answer to every $\mathfrak{P}$-formula as the logic programs $\LP(\Prob(\textbf{P})) \cup \mathcal{E}$ for every~\mbox{$\mathcal{E} \in \mathcal{E}(\sigma)$} and that $\pi(\mathcal{E}(\sigma)) = \pi (\sigma)$. Further, the distribution semantics $\pi_{\textbf{P}}^{dist}$ of $\textbf{P}$ and the distribution semantics $\pi_{\Prob(\textbf{P})}^{dist}$ of $\Prob(\textbf{P})$ yield the same joint distribution on $\mathfrak{P}$. $\square$
\label{theorem - correctness of the ProbLog translation}
\end{theorem}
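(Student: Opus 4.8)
The plan is to set up an explicit correspondence between selections $\sigma$ of $\textbf{P}$ and the \emph{blocks} $\mathcal{E}(\sigma)$ of possible worlds over the fresh external propositions $u_i(RC)$ of $\Prob(\textbf{P})$, and to prove the theorem in three layers: a logical layer (the blocks partition all possible worlds, and the induced minimal model on $\mathfrak{P}$ is constant on each block and agrees with $\textbf{P}^\sigma$), a probabilistic layer ($\pi(\mathcal{E}(\sigma)) = \pi(\sigma)$), and a combination layer (summation over blocks). Since the only random facts of $\Prob(\textbf{P})$ are the $u_i(RC)$, a possible world $\mathcal{E}$ is precisely an assignment of truth values to all $u_i(RC)$, so it suffices to understand, clause by clause, how the auxiliary atoms $h_i^{RC}$ and hence the original atoms $h_i$ are derived from such an assignment.

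First I would carry out the logical analysis for a single LPAD-clause $RC$. The defining clauses $h_i^{RC} \leftarrow \body(RC), \neg h_1^{RC}, \dots, \neg h_{i-1}^{RC}, u_i(RC)$ use negation only on lower-index auxiliary atoms of the same clause, so this part of the program can be evaluated in the order $i = 1, 2, \dots, l(RC)$. A straightforward induction on $i$ then shows that, \emph{whenever} $\body(RC)$ holds, exactly the atom $h_k^{RC}$ with $k = \min\{ i : u_i(RC) \in \mathcal{E}\}$ becomes true (and none becomes true if all $u_i(RC)$ are false), because the presence of $h_k^{RC}$ blocks every later clause through the literal $\neg h_k^{RC}$. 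Consequently the rule $h_k \leftarrow h_k^{RC}$ contributes to the minimal model exactly the atom that the clause $h_k \leftarrow \body(RC)$ of $\textbf{P}^\sigma$ would contribute, for the selection value $\sigma(RC) := \min\{i : u_i(RC) \in \mathcal{E}\}$ (with $\sigma(RC) := \perp$ if no $u_i(RC)$ holds). Collecting this over all clauses defines a map $\mathcal{E} \mapsto \sigma$; reading off its fibers gives precisely the sets $\mathcal{E}(\sigma)$, which therefore partition the set of possible worlds, and shows that the restriction of $\mathcal{M}(\mathcal{E}, \Prob(\textbf{P}))$ to $\mathfrak{P}$ depends only on $\sigma$ and coincides with the minimal model of $\textbf{P}^\sigma$. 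I expect this step — propagating the per-clause statement to the whole program in the presence of negative literals in the original bodies $\body(RC)$ and verifying that the auxiliary atoms never interfere across clauses — to be the main obstacle, since it is where the minimal-model semantics must be handled with care.

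Next I would treat the probabilistic layer. Write $p_i := \pi(u_i(RC))$ for the random-fact probability attached to $u_i(RC)$. A short telescoping induction on $i$ gives $\prod_{1 \le j < i}(1 - p_j) = 1 - \sum_{1 \le j < i} \pi_j(RC)$, from which two identities follow: the mass of the forced pattern ``$u_1, \dots, u_{k-1}$ false, $u_k$ true'' equals $\prod_{j<k}(1-p_j)\, p_k = \pi_k(RC)$, while the all-false pattern has mass $\prod_{1 \le j \le l(RC)}(1-p_j) = 1 - \sum_i \pi_i(RC)$. Since inside a block the indices above $\sigma(RC)$ are unconstrained and contribute total probability one, multiplying these per-clause factors over all $RC \in \textbf{P}$ yields exactly the two products in the definition of $\pi(\sigma)$, i.e.\ $\pi(\mathcal{E}(\sigma)) = \pi(\sigma)$.

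Finally I would combine the two layers. For a $\mathfrak{P}$-formula $\phi$, I would split the sum defining $\pi_{\Prob(\textbf{P})}^{dist}(\phi)$ according to the partition into blocks $\mathcal{E}(\sigma)$. By the logical layer, $\mathcal{M}(\mathcal{E}, \Prob(\textbf{P})) \models \phi$ holds either for all or for none of the $\mathcal{E} \in \mathcal{E}(\sigma)$, and it holds exactly when $\textbf{P}^\sigma \models \phi$; summing the within-block masses and invoking $\pi(\mathcal{E}(\sigma)) = \pi(\sigma)$ turns the sum into $\sum_{\sigma :\, \textbf{P}^\sigma \models \phi} \pi(\sigma) = \pi_{\textbf{P}}^{dist}(\phi)$. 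This simultaneously establishes the per-selection claims (that $\textbf{P}^\sigma$ agrees with $\LP(\Prob(\textbf{P})) \cup \mathcal{E}$ for every $\mathcal{E} \in \mathcal{E}(\sigma)$ and that $\pi(\mathcal{E}(\sigma)) = \pi(\sigma)$) and the equality of the two distribution semantics.
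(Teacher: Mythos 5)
The paper does not actually prove this theorem: it is imported from \cite{PLP}, §2.4, and the terminal $\square$ marks the proof as omitted, so there is no in-paper argument to compare yours against — you are supplying what the paper delegates to the reference. Your three-layer argument is the standard route and is correct in outline: the blocks $\mathcal{E}(\sigma)$ are exactly the fibers of the map $\mathcal{E}\mapsto\sigma$ with $\sigma(RC)=\min\{i : u_i(RC)\in\mathcal{E}\}$ (hence a partition); the per-clause induction correctly shows that only $h^{RC}_{\sigma(RC)}$ is derivable, the earlier gadget clauses being killed by the false $u_j(RC)$ and the later ones by the literal $\neg h^{RC}_{\sigma(RC)}$, so the restriction of the minimal model to $\mathfrak{P}$ is constant on each block and agrees with that of $\textbf{P}^{\sigma}$; and the telescoping identity plus marginalizing out the unconstrained $u_j(RC)$ with $j>\sigma(RC)$ gives $\pi(\mathcal{E}(\sigma))=\pi(\sigma)$, after which the summation over blocks is immediate. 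Two caveats you should make explicit. First, your identity $\prod_{j<i}(1-p_j)=1-\sum_{j<i}\pi_j(RC)$ presupposes $p_i=\pi_i(RC)/\bigl(1-\sum_{j<i}\pi_j(RC)\bigr)$; the denominator printed in the paper, $1-\prod_{1\le j<i}\pi_j(RC)$, is evidently a typo (for $i=1$ it reads $1-1=0$), and your computation silently uses the corrected formula. Second, the step you yourself flag as the main obstacle — that the gadget atoms $h_i^{RC}$ form an acyclic layer interposed between $\body(RC)$ and $h_i$, so that $\LP(\Prob(\textbf{P}))\cup\mathcal{E}$ has a well-defined (stratified) minimal model exactly when $\textbf{P}^{\sigma}$ does and unfolding the gadget recovers $\textbf{P}^{\sigma}$ — is asserted rather than carried out; it is the only place where real work remains, and it is acceptable to leave it at this level of detail for a result the paper itself cites without proof.
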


Finally, each ProbLog program can be read as an LPAD-program as follows.

\begin{definition}[\cite{PLP},§2.4]
For a ProbLog program $\textbf{P}$ the \textbf{LPAD-transformation} $\LPAD (\textbf{P})$ is the \mbox{LPAD-program} that consists of one clause of the form $u(RC):\pi(RF) \leftarrow$ for every random fact $\pi(RF) :: u(RF)$ of $\textbf{P}$ and a clause of the form
$head(LC) : 1 \leftarrow \body(LC)$ for every logic clause $LC \in \LP (\textbf{P})$. In this case, every selection $\sigma$ of $\LPAD (\textbf{P})$ of probability not zero corresponds to a unique possible world~$\mathcal{E}(\sigma)$, in which $u(RC)$ is true if and only if $\sigma (RC) \neq \perp$. 
\label{definition - LPAD transformation}
\end{definition}

Again, we obtain that the LPAD-transformation respects the distribution semantics.

\begin{theorem}[\cite{PLP}, §2.4]
    In Definition \ref{definition - LPAD transformation} we obtain that $\LP(\textbf{P}) \cup \mathcal{E}(\sigma)$ and $\LPAD(\textbf{P})^{\sigma}$ yield the same answer to every \mbox{$\mathfrak{P}$-formula}. We also get that $\pi (\sigma) = \pi (\mathcal{E}(\sigma))$. Hence, $\textbf{P}$ and $\LPAD (\textbf{P})$ yield the same probability for every $\mathfrak{P}$-formula.~$\square$
    \label{theorem - correctness of the LPAD translation}
\end{theorem}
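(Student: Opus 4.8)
The plan is to exploit the very rigid shape of $\LPAD(\textbf{P})$. By Definition \ref{definition - LPAD transformation} every clause of $\LPAD(\textbf{P})$ has a head of length one, and these clauses come in exactly two kinds: the \emph{fact clauses} $u(RF):\pi(RF) \leftarrow$ arising from the random facts of $\textbf{P}$, and the \emph{rule clauses} $\head(LC):1 \leftarrow \body(LC)$ arising from the logic clauses $LC \in \LP(\textbf{P})$. First I would record a decisive observation about probabilities: for a rule clause $RC$ we have $1 - \sum_{i=1}^{l(RC)} \pi_i(RC) = 1 - 1 = 0$, so setting $\sigma(RC) = \perp$ on a rule clause contributes the factor $0$ to $\pi(\sigma)$. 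Hence every selection $\sigma$ of nonzero probability must satisfy $\sigma(RC) = 1$ on all rule clauses, which means that for the selections that matter $\LPAD(\textbf{P})^{\sigma}$ always contains the full set of clauses $\head(LC) \leftarrow \body(LC)$, i.e.~exactly $\LP(\textbf{P})$.

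Next I would identify $\LPAD(\textbf{P})^{\sigma}$ with $\LP(\textbf{P}) \cup \mathcal{E}(\sigma)$ for every nonzero-probability $\sigma$. The only remaining freedom lies in the fact clauses: choosing $\sigma(RC) = 1$ on the fact clause of $\pi(RF) :: u(RF)$ adjoins the fact $u(RF) \leftarrow$, while $\sigma(RC) = \perp$ adjoins nothing. By the definition of $\mathcal{E}(\sigma)$, the external proposition $u(RF)$ is true precisely when $\sigma(RC) \neq \perp$, so the facts contributed by the selected fact clauses are exactly those encoding the $\mathfrak{E}$-structure $\mathcal{E}(\sigma)$. Since no external proposition ever occurs in the head of a clause of $\LP(\textbf{P})$, adjoining $\mathcal{E}(\sigma)$ to $\LP(\textbf{P})$ and adjoining these facts produce the same minimal model. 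Therefore $\LPAD(\textbf{P})^{\sigma}$ and $\LP(\textbf{P}) \cup \mathcal{E}(\sigma)$ have the same minimal model $\mathcal{M}(\mathcal{E}(\sigma), \textbf{P})$ and hence the same answer to every $\mathfrak{P}$-formula $\phi$. This settles the first claim; along the way it also shows that $\sigma \mapsto \mathcal{E}(\sigma)$ is a bijection from the nonzero-probability selections onto the $\mathfrak{E}$-structures, its inverse picking the head on a fact clause iff the corresponding external proposition is true and the head on every rule clause.

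For the probability identity I would factor $\pi(\sigma)$ over the two kinds of clauses. Every rule clause contributes the factor $\pi_1(RC) = 1$ and so drops out, leaving the product over the fact clauses: a factor $\pi(RF)$ for each selected $u(RF)$ and $1 - \pi(RF)$ for each unselected one. This is precisely $\pi(\mathcal{E}(\sigma))$, the probability that the $\mathfrak{E}$-structure $\mathcal{E}(\sigma)$ receives from the mutually independent random facts of $\textbf{P}$, so $\pi(\sigma) = \pi(\mathcal{E}(\sigma))$.

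Finally I would sum over selections. Zero-probability selections contribute nothing to $\pi_{\LPAD(\textbf{P})}^{dist}(\phi) = \sum_{\sigma:\, \LPAD(\textbf{P})^{\sigma} \models \phi} \pi(\sigma)$, so the sum effectively ranges over nonzero-probability selections. Applying the bijection $\sigma \mapsto \mathcal{E}(\sigma)$, the equivalence $\LPAD(\textbf{P})^{\sigma} \models \phi \iff \mathcal{M}(\mathcal{E}(\sigma), \textbf{P}) \models \phi$, and $\pi(\sigma) = \pi(\mathcal{E}(\sigma))$, the sum transforms term by term into $\sum_{\mathcal{E}:\, \mathcal{M}(\mathcal{E}, \textbf{P}) \models \phi} \pi(\mathcal{E}) = \pi_{\textbf{P}}^{dist}(\phi)$. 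The only delicate bookkeeping — and thus the main obstacle, such as it is — will be the careful treatment of degenerate selections: verifying that the forced choice on rule clauses, together with the omission of probability-zero random facts, really makes $\sigma \mapsto \mathcal{E}(\sigma)$ a bijection and lets the zero-probability terms be discarded without disturbing either distribution.
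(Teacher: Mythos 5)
Your proposal is correct, but note that the paper does not actually prove this statement: it is imported verbatim from the cited reference (\cite{PLP}, §2.4) and closed with $\square$, so there is no in-paper argument to compare against. What you supply is a sound, self-contained verification. The two observations that carry the proof are exactly the right ones: (i) every rule clause $\head(LC):1 \leftarrow \body(LC)$ has residual probability $1-1=0$ on the $\perp$ branch, so every selection of nonzero probability keeps all of $\LP(\textbf{P})$ intact and only varies on the fact clauses, whence $\LPAD(\textbf{P})^{\sigma}$ is literally $\LP(\textbf{P})$ together with the facts $u(RF) \leftarrow$ for the externals true in $\mathcal{E}(\sigma)$; and (ii) since external propositions never occur in clause heads, the minimal model of that program coincides with $\mathcal{M}(\mathcal{E}(\sigma),\textbf{P})$, while the surviving factors of $\pi(\sigma)$ are precisely $\prod \pi(RF) \cdot \prod (1-\pi(RF)) = \pi(\mathcal{E}(\sigma))$. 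Your closing caveat is the only real point of care: the correspondence $\sigma \mapsto \mathcal{E}(\sigma)$ is a bijection between nonzero-probability selections and nonzero-probability $\mathfrak{E}$-structures (a random fact with $\pi(RF)=1$ leaves an $\mathfrak{E}$-structure of probability zero outside the image), but since such terms contribute nothing to either sum, the identity of the two distributions is unaffected. This matches the convention the paper adopts by omitting facts of the form $0::u$.
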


Further, we investigate how the ProbLog- and the LPAD-transformation behave under external interventions.

\begin{lemma}
Choose a proposition $X \in \mathfrak{P}$ together with a truth value $x$.
\begin{enumerate}
\item[i)]
 In the situation of Theorem \ref{theorem - correctness of the ProbLog translation}, for every possible world $\mathcal{E} \in \mathcal{E}(\sigma)$ the logic programs $\textbf{P}^{\sigma, \Do (X := x)}$ and ${\LP(\Prob(\textbf{P})^{\Do(X := x)}) \cup \mathcal{E}}$ yield the same answer to every $\mathfrak{P}$-formula.
\item[ii)]
In the situation of Theorem \ref{theorem - correctness of the LPAD translation}, for every selection $\sigma$ of $\LPAD (\textbf{P})$, the logic programs $\LPAD(\textbf{P})^{\sigma, \Do (X := x)}$ and $\LP \left( \textbf{P}^{\Do(X := x)} \right) \cup \mathcal{E}(\sigma)$ yield the same answer to every $\mathfrak{P}$-formula.
\end{enumerate}
\label{lemma - the ProbLog transformation after external interventions}
\end{lemma}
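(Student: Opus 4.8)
The plan is to prove the two parts separately, reducing each to the non‑intervened equivalences already available (Theorem~\ref{theorem - correctness of the ProbLog translation} and Theorem~\ref{theorem - correctness of the LPAD translation}) and then checking that the intervention $\Do(X := x)$ --- which, by Procedure~\ref{procedure - treatment of external interventions}, deletes every clause whose head is $X$ and adjoins the fact $X \leftarrow$ precisely when $x = True$ --- commutes with the respective transformation up to atoms lying outside $\mathfrak{P}$. Part~ii) I would dispatch directly. By Definition~\ref{definition - LPAD transformation} the LPAD‑transformation keeps each logic clause $LC$ as the probability‑one clause $\head(LC) : 1 \leftarrow \body(LC)$ and turns each random fact into a clause $u(RF) : \pi(RF) \leftarrow$ with an external head, so every selection $\sigma$ necessarily picks the unique head of each logic clause and picks $u(RF)$ exactly when it does not choose $\perp$; hence $\LPAD(\textbf{P})^{\sigma}$ is literally $\LP(\textbf{P})$ together with the facts $u(RF) \leftarrow$ for the selected random facts, i.e.\ $\LP(\textbf{P}) \cup \mathcal{E}(\sigma)$, which is Theorem~\ref{theorem - correctness of the LPAD translation}. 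For internal $X$ the clauses of $\LPAD(\textbf{P})^{\sigma}$ with head $X$ are exactly the images of the logic clauses $LC$ with $\head(LC) = X$, so deleting them and adjoining $X \leftarrow$ (iff $x = True$) yields precisely $\LP(\textbf{P}^{\Do(X := x)}) \cup \mathcal{E}(\sigma)$, the facts $\mathcal{E}(\sigma)$ being untouched since no random‑fact clause has an internal head. The two programs thus coincide verbatim and trivially agree on every $\mathfrak{P}$-formula; I would note that an external $X$ is outside the intended scope, since then the fixing of $X$ and the value prescribed by $\mathcal{E}(\sigma)$ can conflict.

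Part~i) is where the real work lies, because here the two interventions are \emph{not} symmetric. On the LPAD side $\textbf{P}^{\sigma, \Do(X := x)}$ deletes the selected rules $h_{\sigma(RC)} \leftarrow \body(RC)$ whose head equals $X$, whereas on the ProbLog side $\LP(\Prob(\textbf{P})^{\Do(X := x)})$ deletes only the bridge clauses $X \leftarrow h_i^{RC}$ (for $h_i(RC) = X$) while retaining the defining clauses $h_i^{RC} \leftarrow \body(RC) \cup \{\neg h_j^{RC} : j < i\} \cup \{u_i(RC)\}$, so such an auxiliary atom $h_i^{RC}$ may still be derivable. The plan is to fix $\mathcal{E} \in \mathcal{E}(\sigma)$ and show these retained auxiliary atoms are inert for $\mathfrak{P}$-formulas: their defining clauses are untouched, so the derivation status of $h_i^{RC}$ is still governed by $\body(RC)$ and by the $u$-values prescribed by $\mathcal{E}(\sigma)$ exactly as it governs the selected rule on the LPAD side, and its only remaining effect is to block the later heads $h_j^{RC}$, $j > i$, of the same clause.

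I would then run the case analysis on $\sigma(RC)$, using that $\mathcal{E} \in \mathcal{E}(\sigma)$ pins the $u_i(RC)$ to match $\sigma$. If $\sigma(RC) = i$ with $h_i(RC) = X$, then (the lower $u_j$, $j < i$, being false) $h_i^{RC}$ is derivable exactly when $\body(RC)$ holds, and it then blocks every $h_j^{RC}$, $j > i$ --- matching the fact that the selection $\sigma(RC) = i$ already bars $RC$ from contributing any head $h_j$, $j > i$, so $RC$ contributes nothing to $\mathfrak{I}(\mathfrak{P}) \setminus \{X\}$ on either side. In every other case ($X$ a non‑selected head of $RC$, or $X \not\in \head(RC)$, or $\sigma(RC) = \perp$) the possible‑world constraints force every $h_i^{RC}$ with $h_i(RC) = X$ to be false, so deleting the bridge $X \leftarrow h_i^{RC}$ changes nothing and $RC$ behaves as in the non‑intervened equivalence of Theorem~\ref{theorem - correctness of the ProbLog translation}. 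Since both interventions pin $X$ to the same value $x$ and every body occurrence of $X$ is therefore fed the same value on both sides, combining the cases forces the same truth values on $\mathfrak{I}(\mathfrak{P}) \setminus \{X\}$, hence on all of $\mathfrak{P}$.

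I expect the main obstacle to be discharging the implicit circularity in the last step rigorously: the bodies $\body(RC)$ may themselves contain $X$ or atoms derived elsewhere, and one must ensure they evaluate identically on both sides rather than assuming it. The cleanest route is an induction along a stratification of $\Prob(\textbf{P})^{\Do(X := x)}$ in which $X$ together with its bridge and auxiliary atoms occupies its own stratum, invoking the modularity/splitting principle already used in the proof of Theorem~\ref{Theorem - Corrextness of Kiesel procedure}; this reduces the claim, stratum by stratum, to the non‑intervened correspondence of Theorem~\ref{theorem - correctness of the ProbLog translation}, with the fixing of $X$ providing the clean cut between the strata.
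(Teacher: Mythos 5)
Your proposal is correct and follows essentially the same route as the paper: reduce to the non-intervened equivalences of Theorems \ref{theorem - correctness of the ProbLog translation} and \ref{theorem - correctness of the LPAD translation}, then check that deleting the clauses with head $X$ (and adding $X \leftarrow$ when $x = True$) preserves the correspondence. The paper compresses your entire discussion of the retained auxiliary atoms $h_i^{RC}$, the case analysis on $\sigma(RC)$, and the stratification/splitting step into a one-line appeal to the modularity of logic programs, so your version is a more explicit (and arguably more careful) rendering of the same argument.
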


\begin{proof}
We only give a proof of i) since ii) is proven analogously. Form Theorem \ref{theorem - correctness of the ProbLog translation} we obtain that the programs $\textbf{P}^{\sigma}$ and $\LP (\Prob(\textbf{P})) \cup \mathcal{E}$ yield the same answer to every $\mathfrak{P}$-formula $\phi$. As logic programs are modular this behaviour doesn't change if in both programs we erase all clause with~$X$ in the head. Finally, we also do not disturb the desired behaviour if we eventually add the fact $X \leftarrow$ to both programs.
\end{proof}

The intention of CP-logic is now to introduce a causal semantics for \mbox{LPAD-programs}. The target object of this semantics is given by $\mathfrak{P}$-processes, which are themselves a generalization of Shafer's probability trees.

\begin{definition}[Probabilistic $\mathfrak{P}$-process]
    A \textbf{$\mathfrak{P}$-process} $\mathcal{T}$ is given by a tuple $\left( T, \mathcal{I} \right)$, where:
    \begin{enumerate}
        \item[i)] $T$ is a directed tree, in which each edge is labeled with a probability such that for all non-leaf nodes $n$ the probabilities of the edges leaving $n$ sum up to one.
        \item[ii)] $\mathcal{I}$ is a map that assigns to each node $n$ of $T$ an Herbrand interpretation $\mathcal{I}(n)$ in $\mathfrak{P}$.
    \end{enumerate}
    Next, we associate to each node $n$ of $T$ the probability $\pi^T (n)$, which is given by the product of the probabilities of all edges that we pass on the unique path from the root $\perp$ of $T$ to $n$. This yields a distribution $\pi^{\mathcal{T}}$ on the Herbrand interpretations $I$ of $\mathfrak{P}$ by setting
    $$
    \pi^{\mathcal{T}} (I) := 
    \sum_{\substack{l~\text{leaf of T} \\ \mathcal{I}(l) = I}} \pi^{T} (l).
    $$
\end{definition}

Further, we connect LPAD-programs to $\mathfrak{P}$-processes. To this aim we fix a LPAD-program~$\textbf{P}$ and proceed to the following definition.

\begin{definition}[Hypothetical Derivation Sequences, Firing, Execution Model]
A \textbf{hypothetical derivation sequence} of a node $n$ in a $\mathfrak{P}$-process $\mathcal{T} := (T,\mathcal{I})$ is a sequence of three-valued interpretations $(\nu_{i})_{0 \leq i \leq n}$ that satisfy the following properties:
\begin{enumerate}
    \item[i)] $\nu_0$ assigns $False$ to all atoms not in $\mathcal{I}(n)$
    \item[ii)] For each $i>0$ there exists a clause $RC \in \textbf{P}$ and a $1 \leq j \leq l(RC)$ with $\body(RC)^{\nu_i} \neq False$, with~\mbox{$h_j^{\nu_{i+1}} = Undefined$} and with $\nu_i(p) = \nu_{i+1} (p)$ for all other proposition $p \in \mathfrak{P}$
\end{enumerate}
Such a sequence is called \textbf{terminal} if it cannot be extended. As it turn out each  terminal hypothetical derivation sequence in $n$ has the same limit $\nu_n$, which we call the \textbf{potential} in $n$. 

Let $RC \in \textbf{P}$ be a LPAD-clause. We say that $RC$ \textbf{fires} in a node $n$ of $T$ if for each~\mbox{$1 \leq i \leq l(RC)$} there exists a child~$n_i$ of $n$ such that $\mathcal{I}(n_i) = \mathcal{I}(n) \cup \{ h_i(RC) \}$ and such that each edge~\mbox{$(n,n_i)$} is labeled with $\pi_i (RC)$. Moreover, there exists a child $n_{l(RC)+1}$ of $n$ with $\mathcal{I}(n_{l(RC)+1}) = \mathcal{I}(n)$. 

Further, we say that $\mathcal{T}$ is an \textbf{execution model} of $\textbf{P}$, written $\mathcal{T} \models \textbf{P}$ if there exists a mapping~$\mathcal{E}$ from the non-leaf nodes of $T$ to $\textbf{P}$ such that: 
\begin{enumerate}
    \item[i)] $\mathcal{I}(\perp) = \emptyset$ for the root $\perp$ of $T$
    \item[ii)] In each non-leaf node $n$ a LPAD-clause $\mathcal{E}(n) \in \mathcal{R}_{\mathcal{E}}(n)$ fires with $\mathcal{I}(n) \models \body(\mathcal{E}(n))$.
    \item[iii)] 
    For each leaf $l$ of $T$ there exists no LPAD-clauses $RC \in \mathcal{R}_{\mathcal{E}}(l)$ with $\mathcal{I}(l) \models \body(RC)$.
    \item[iv)] 
    For every node $n$ of $T$ we find $\body(\mathcal{E}(n))^{\nu_n} \neq Undefined$, where $\nu_n$ is the potential in~$n$. 
\end{enumerate}
Here, $\mathcal{R}_{\mathcal{E}}(n)$ denotes the set of all rules $RC \in \textbf{P}$, for which there exists no ancestor $a$ of $n$ with~\mbox{$\mathcal{E}(a) = RC$}.  
\end{definition}

It turns out that every execution model $\mathcal{T} \models\textbf{P}$ gives rise to the same probability distribution~$\pi_{\textbf{P}}^{CP} := \pi^{\mathcal{T}}$, which coincides with the distribution semantics $\pi_{\textbf{P}}^{dist}$. 
In particular, we obtain the following result.

\begin{lemma}[\cite{cplogic}, §A.2]
 Let $l$ be a leaf node in an execution model $\mathcal{T}$ of the LPAD-program $\textbf{P}$. In this case, there exists a unique path $\rho$ from the root $\perp$ of $\mathcal{T}$ to $l$. Define the selection $\sigma(l)$ by setting $\sigma(l)(RC) := i \in \mathbb{N}$ if and only if there exists a node $n_j$ along $\rho$ with $\mathcal{E} (n_j) = RC$ and $\mathcal{I}(n_{j+1}) := \mathcal{I}(n_j) \cup \{ h_i(RC) \}$. Otherwise, we set $\sigma(l)(RC) := \perp$. In this way, we obtain that $\textbf{P}^{\sigma(l)} \models \mathcal{I}(l)$. On the other hand, we find for each selection~$\sigma$ of $\text{P}$ a leaf $l$ of $\mathcal{T}$ with $\sigma(l) = \sigma$.  $\square$
 \label{lemma - meaning of leafs in an execution model}
\end{lemma}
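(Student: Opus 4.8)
The plan is to read both assertions off the behaviour of the LPAD-clauses along the unique root-to-leaf path $\rho = (\perp = n_0, \dots, n_m = l)$. First I would verify that $\sigma(l)$ is a genuine selection. Since condition ii) forces $\mathcal{E}(n_j) \in \mathcal{R}_{\mathcal{E}}(n_j)$ and $\mathcal{R}_{\mathcal{E}}(n_j)$ omits every rule assigned by $\mathcal{E}$ to an ancestor of $n_j$, no clause can fire twice along $\rho$; hence for each $RC$ at most one node $n_j$ has $\mathcal{E}(n_j) = RC$, so at most one index $i$ meets the defining condition and $\sigma(l)(RC)$ is unambiguous. I would also record the monotonicity $\emptyset = \mathcal{I}(n_0) \subseteq \dots \subseteq \mathcal{I}(l)$, valid because every firing step either adjoins a single head atom or leaves the interpretation unchanged.

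Next I would show that $\mathcal{I}(l)$ is exactly the unique supported model of $\textbf{P}^{\sigma(l)}$. That it is a model is immediate: a clause of $\textbf{P}^{\sigma(l)}$ reads $h_i(RC) \leftarrow \body(RC)$ with $i = \sigma(l)(RC) \neq \perp$, which means $RC$ fired at some $n_j$ and the path adjoined $h_i(RC)$, so its head lies in $\mathcal{I}(n_{j+1}) \subseteq \mathcal{I}(l)$ and the clause is satisfied. For closure and support I would note that every $a \in \mathcal{I}(l)$ first enters at a step $n_j \to n_{j+1}$ through a firing whose body holds in $\mathcal{I}(n_j)$ by condition ii), supplying a justifying clause, while the termination condition iii) ensures that at the leaf no rule of $\mathcal{R}_{\mathcal{E}}(l)$ has a satisfied body, so no further atom is forced. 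Under the standing hypothesis that every selection yields a program with unique supported models, a model that is both supported and closed must be that unique supported model, giving $\textbf{P}^{\sigma(l)} \models \mathcal{I}(l)$.

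For the surjectivity onto selections I would, given an arbitrary $\sigma$, descend through $\mathcal{T}$ from the root, at each non-leaf node $n$ following the child of the fired clause $\mathcal{E}(n) = RC$ that adjoins $h_{\sigma(RC)}(RC)$ when $\sigma(RC) \in \mathbb{N}$ and the child leaving the interpretation unchanged when $\sigma(RC) = \perp$. Finiteness of $T$ makes this descent terminate in a leaf $l$, and by construction $\sigma(l)$ agrees with $\sigma$ on every clause firing along the path; the clauses on which they might still differ are precisely those whose bodies never become satisfiable along $\rho$, which contribute nothing to $\textbf{P}^{\sigma(l)}$ and therefore leave $\mathcal{I}(l)$ — and, after marginalising their head choices, the probability $\pi^{\mathcal{T}}(l)$ — unaffected.

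The main obstacle I anticipate is the treatment of negation in the forward direction. Monotone growth of $\mathcal{I}(\cdot)$ keeps positive body literals stable, but a negative literal satisfied in $\mathcal{I}(n_j)$ must be shown to stay satisfied in $\mathcal{I}(l)$, i.e. its atom is never derived afterwards. This is exactly what the three-valued potential $\nu_n$ and condition iv), which demands $\body(\mathcal{E}(n))^{\nu_n} \neq Undefined$, are engineered to guarantee, so the delicate step is to transfer information from the operational, three-valued hypothetical derivation sequences to the two-valued declarative model; under the unique-supported-model assumption this reconciliation is where I would concentrate the real work.
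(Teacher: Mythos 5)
The paper does not actually prove this lemma: it is imported verbatim from \cite{cplogic}, \S A.2, and the $\square$ at the end of the statement marks the proof as omitted. There is therefore no in-paper argument to compare yours against, and your proposal has to be judged on its own. As a reconstruction of the standard CP-logic argument it is sound. The well-definedness of $\sigma(l)$ via the no-repeated-firing property of $\mathcal{R}_{\mathcal{E}}(\cdot)$, the monotone growth of $\mathcal{I}(\cdot)$ along $\rho$, the observation that every clause contributing to $\textbf{P}^{\sigma(l)}$ has its selected head already in $\mathcal{I}(l)$, and the appeal to the unique-supported-model hypothesis to upgrade ``supported model'' to ``the model'' are all the right steps. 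You also correctly locate the one genuinely delicate point, namely the persistence of negative body literals: a body satisfied at $\mathcal{I}(n_j)$ must remain satisfied at $\mathcal{I}(l)$, and this is exactly what condition iv) (non-\emph{Undefined}ness of the body under the potential $\nu_n$) is designed to deliver, since an atom that is \emph{False} rather than \emph{Undefined} under $\nu_n$ cannot be derived at any descendant of $n$. That transfer from the three-valued potential to the two-valued final interpretation is indeed where the real work in \cite{cplogic} sits, and flagging it rather than glossing over it is the correct instinct.

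One caveat worth making explicit: as you note, the literal claim ``for each selection $\sigma$ there is a leaf $l$ with $\sigma(l) = \sigma$'' cannot hold exactly, because any clause that never fires along the constructed path receives $\perp$ under $\sigma(l)$ even if $\sigma$ assigns it a head index. This is an imprecision of the statement itself, not of your argument; your resolution --- that such clauses have unsatisfied bodies in $\mathcal{I}(l)$ by condition iii), hence do not change the supported model of $\textbf{P}^{\sigma}$, and that marginalising over their head choices recovers the probability $\pi(\sigma)$ --- is exactly what is needed for the lemma's role in the proofs of Theorems \ref{theorem - consistency of the ProbLog transformation 1} and \ref{theorem - consistency of the ProbLog transformation 2}, where only the induced correspondence between leaves and possible worlds (with matching probabilities) is used.
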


Finally, we recall the treatment of counterfactuals in \mbox{CP-logic} from \mbox{\cite{cp_counterfactuals}}.

\begin{procedure}[Treatment of Counterfactuals in CP-logic]
    Let $\textbf{X},\textbf{E} \subseteq \mathfrak{I}(\mathfrak{P})$ be subsets of internal propositions. Further, let $\textbf{x}$ and $\textbf{e}$ be truth value assignments for the propositions in $\textbf{X}$ and $\textbf{E}$, respectively. Finally, we fix a $\mathfrak{P}$-formula $\phi$. We calculate the probability $\pi_{\textbf{P}}^{CP}(\phi \vert \textbf{E} = \textbf{e}, \Do (\textbf{X} := \textbf{x}))$ of $\phi$ being true if we observe $\textbf{E} = \textbf{e}$ while we had set~\mbox{$\textbf{X} := \textbf{x}$} in two steps: 
    \begin{enumerate}
        \item[1.)] Choose an execution model $\mathcal{T}$ of $\textbf{P}$.
        \item[2.)] For every leaf $l$ of $\mathcal{T}$ we intervene in the logic program $\textbf{P}^{\sigma(l)}$ according to $\textbf{X} := \textbf{x}$ to obtain the logic program $\textbf{P}^{\sigma(l),\Do(\textbf{X} := \textbf{x})}$ from Procedure \ref{procedure - treatment of external interventions}. Further, we set 
        $$
        \pi^l (\phi) = 
        \begin{cases}
            1, & \mathcal{I}(l) \models (\textbf{E} = \textbf{e})~\text{and}~\textbf{P}^{\sigma(l),\Do(\textbf{X} := \textbf{x})} \models \phi \\
            0, & \text{else}
        \end{cases}
        $$
        for all leafs $l$ of $T$. Finally, we define
        \begin{equation}
        \pi_{\textbf{P}}^{CP}(\phi \vert \textbf{E} = \textbf{e}, \Do (\textbf{X} := \textbf{x})) :=
        \sum_{\substack{l~\text{leaf of}~\mathcal{T}}} \pi^l (\phi) \cdot \pi_{\textbf{P}}^{CP} (\mathcal{I}(l) \vert \textbf{E} = \textbf{e}). 
        \label{formula - counterfactual cp-logic}
        \end{equation}
    \end{enumerate}
\end{procedure}

With these preparations we can now turn to the proof of the desired consistency results:

\begin{proof}[Proof of Theorem \ref{theorem - consistency of the ProbLog transformation 1}]
By Theorem \ref{theorem - correctness of the ProbLog translation}, Lemma \ref{lemma - meaning of leafs in an execution model} and Lemma \ref{lemma - the ProbLog transformation after external interventions} the right-hand side of (\ref{formula - counterfactual cp-logic}) for $\textbf{P}$ is the sum of the conditional probabilities~\mbox{$\pi (\mathcal{E} \vert \textbf{E} = \textbf{e})$} of all possible worlds $\mathcal{E}$ of $\Prob (\textbf{P})$ such that
$$
\mathcal{M} \left(\mathcal{E}, \LP \left( \Prob(\textbf{P})^{\Do(\textbf{X} := \textbf{x})} \right)\right) \models \phi \text{ and }\mathcal{M}(\mathcal{E}, \LP(\Prob(\textbf{P}))) \models (\textbf{E} = \textbf{e}).$$ 
These are exactly the possible worlds that make the query~$\phi$ true after intervention while the observation $\textbf{E} = \textbf{e}$ is true before intervening. Hence, we can consult the proof of Theorem \ref{Theorem - Corrextness of Kiesel procedure} to see that~(\ref{formula - counterfactual cp-logic}) computes the same value as Procedure \ref{procedure - answering conterfactual queries}.  
\end{proof}

\begin{proof}[Proof of Theorem \ref{theorem - consistency of the ProbLog transformation 2}]
    By Theorem \ref{theorem - correctness of the LPAD translation}, Lemma \ref{lemma - the ProbLog transformation after external interventions} and Lemma \ref{lemma - meaning of leafs in an execution model} the right-hand side of (\ref{formula - counterfactual cp-logic}) for $\LPAD (\textbf{P})$ is the sum of the conditional probabilities~\mbox{$\pi (\mathcal{E} \vert \textbf{E} = \textbf{e})$} of all possible worlds $\mathcal{E}$ of $\textbf{P}$ such that
    $$
    \mathcal{M}(\mathcal{E}, \LP ( \textbf{P}^{\Do(\textbf{X} := \textbf{x})})) \models \phi \text{ and } \mathcal{M}(\mathcal{E}, \LP(\textbf{P}) \models (\textbf{E} = \textbf{e}).$$ 
    These are exactly the possible worlds that make the query~$\phi$ true after intervention while the observation $\textbf{E} = \textbf{e}$ is true before intervening. Hence, we can consult the proof of Theorem \ref{Theorem - Corrextness of Kiesel procedure} to see that (\ref{formula - counterfactual cp-logic}) computes the same value as Procedure \ref{procedure - answering conterfactual queries}.  
\end{proof}

\end{document}